\documentclass[final]{colt2026} 

\title[Equivalence Queries, Revisited]{Learning from Equivalence Queries, Revisited}
\usepackage{times}
\usepackage{cleveref}
\usepackage{mdframed}
\usepackage{xcolor}
\newcommand{\ignore}[1]{}
\usepackage{forest}
\newcommand{\ce}{x \sim \mathrm{CE}(c,h \mid \text{history})}
\newtheorem*{theoremrest}{Theorem}
\newcommand{\E}{\mathop\mathbb{E}}
\newtheorem{claim}{Claim}
\newcommand{\labelthis}[1]{%
  \tag{\theequation}\stepcounter{equation}\label{#1}%
}

\coltauthor{
 \Name{Mark Braverman} \Email{mbraverm@gmail.com}\\
 \addr Princeton and Google
 \AND
 \Name{Roi Livni} \Email{rlivni@tauex.tau.ac.il}\\
 \addr Tel-Aviv University
 \AND
 \Name{Yishay Mansour} \Email{mansour.yishay@gmail.com}\\
 \addr Tel-Aviv University and Google
 \AND
 \Name{Shay Moran} \Email{smoran@technion.ac.il}\\
 \addr Technion and Google
 \AND
 \Name{Kobbi Nissim} \Email{Kobbi.Nissim@georgetown.edu}\\
 \addr Georgetown and Google%
}
\newcommand{\ym}[1]{\textcolor{magenta}{#1}}
\definecolor{ElectricPurple}{RGB}{180,0,255}
\newcommand{\rl}[1]{\textcolor{ElectricPurple}{#1}}
\begin{document}

\maketitle

\begin{abstract}
Modern machine learning systems, such as generative models and recommendation systems, often evolve through a cycle of deploying a model, observing user interactions, and updating the model intermittently based on feedback. This mode of learning contrasts with common supervised learning frameworks, which focus on loss or regret minimization over a shared sequence of prediction tasks.
Motivated by this deployment-driven learning cycle, we revisit the classical model of learning from equivalence queries, introduced by~\cite{angluin_eq}, which provides a simple abstraction of such interactions: a learner repeatedly proposes hypotheses and, whenever the deployed hypothesis is inadequate, receives a counterexample tailored to that hypothesis. Under fully adversarial counterexample generation, however, this model exhibits overly pessimistic worst-case behavior. Moreover, most existing work on learning from equivalence queries considers the \emph{full-information} setting, where the learner observes not only a counterexample but also its correct label. This is an assumption that does not always align with natural interactive settings.

To address these considerations, we restrict the environment to generate counterexamples in a less adversarial manner by introducing a broad class of counterexample generators, which we call \emph{symmetric}. Informally, such symmetric counter example generators select counterexamples based only on the symmetric difference between the hypothesis and the target, and encompass natural feedback mechanisms such as random counterexamples~\citep*{angluin_dohrn_2017,Bhatia22,CFR24}, as well as generators that select counterexamples minimizing a prescribed complexity measure over the instance space. Within this framework, we study learning from equivalence queries under both full-information and bandit feedback. We establish tight bounds on the number of learning rounds in both settings and outline directions for future research. Our techniques rely on a game-theoretic perspective on symmetric adversaries and combine adaptive weighting algorithms with minimax arguments.
\end{abstract}
\begin{keywords}
  interactive learning, online learning, equivalence queries, bandit feedback
\end{keywords}

\section{Introduction}
Much of the supervised learning literature focuses on settings in which a learner makes predictions on a sequence of tasks that is independent of the learner’s current model. For example, tasks drawn from a fixed distribution or presented obliviously by the environment.
In contrast, many modern learning systems are organized around the deployment of an explicit model that is exposed to users and only updated intermittently based on feedback that arises in response to that model. A language model, for instance, is often revised based on users flagging particular generated responses. In such settings, feedback is inherently tied to the currently deployed hypothesis rather than to a shared sequence of prediction tasks. 

Inspired by such learning scenarios, we revisit the classical model of learning from equivalence queries, originally introduced by~\cite{angluin_eq}.
The appeal of this model is that it offers a particularly clean and basic mathematical abstraction of the deployment-driven learning scenarios we wish to study, in which a learner proposes a model, receives feedback only when the model is inadequate, and then updates and redeploys a revised hypothesis.
\begin{mdframed}[
  backgroundcolor=gray!10,
  linecolor=gray!50,
  linewidth=0.6pt,
  roundcorner=6pt,
  innertopmargin=1em,
  innerbottommargin=1em
]
\begin{center}
\textbf{Learning from Equivalence Queries} \citep{angluin_eq}
\end{center}

\medskip
\noindent
Let $\mathcal{H}$ be an hypothesis class over a domain $\mathcal{X}$ with label space $\mathcal{Y}$, and let $c \in \mathcal{H}$ be an unknown target concept.
\medskip
\noindent
The learning interaction proceeds in rounds $t = 1,2,\ldots$ as follows:
\begin{itemize}
    \item The learner proposes an hypothesis $h_t \in \mathcal{H}$.
    \item The environment either \emph{accepts} or \emph{rejects} $h_t$.
    \item If the environment accepts $h_t$, the interaction terminates.
    \item If the environment rejects $h_t$, it returns a counterexample $(x_t,c(x_t))$ s.t.\
    $h_t(x_t) \neq c(x_t)$.
\end{itemize}

\noindent
The environment is required to accept $h_t$ whenever $h_t = c$.

\medskip
\noindent\textit{Interpretation.}
Acceptance indicates that the proposed hypothesis is deemed sufficient by the environment,
which may occur before exact identification of the target.
However, exact correctness is always recognized and must lead to acceptance.

\end{mdframed}

In its classical form, learning from equivalence queries can exhibit pessimistic worst-case behavior even for extremely simple hypothesis classes. A canonical example is the class of singletons over a domain of size $n$, where each hypothesis labels exactly one point as positive and all others as negative. In this case, an adversary can force any learning rule to make $n-1$ equivalence queries by repeatedly returning a counterexample on which the learner’s current singleton hypothesis predicts label~$1$ while the true label is~$0$. This strategy can be maintained consistently until only a single point remains.

This simple impossibility stems from two modeling choices. First, the learner is required to be proper: it must propose hypotheses from the given class. Indeed, if improper hypotheses were allowed, the learner could submit the all-zero hypothesis and terminate immediately. Second, the environment is allowed to respond with a worst-case counterexample at each round. 
For instance, in the singleton class, if the environment were to return a \emph{uniformly random} counterexample from the current disagreement set, then whenever the learner proposes an incorrect singleton, there are exactly two disagreeing points: the point labeled $1$ by the hypothesis (where the target label is $0$) and the target’s unique positive point (where the hypothesis predicts $0$). Thus, with probability $1/2$ the returned counterexample is the target’s positive point, revealing the correct singleton immediately; consequently, the interaction terminates after a constant expected number of rounds.

{In this work, we mitigate the pessimism of worst-case equivalence-query lower bounds by restricting the power of the counterexample-generating environment. We consider a class of adversaries that is weaker than fully adaptive worst-case adversaries, yet still allows nontrivial, history-dependent behavior; the formal definition appears in the next section. This setting strictly generalizes random counterexample generation, which was previously shown to admit efficient learning guarantees in equivalence-query models \citep{angluin_dohrn_2017,Bhatia22,CFR24}, while permitting substantially richer forms of interaction.}

{At the same time, we retain the requirement that the learner be \emph{proper}, i.e., that it propose hypotheses from the original hypothesis class. This requirement is motivated by several complementary considerations. First, in many learning systems, model updates are naturally implemented as parameter updates within a fixed architecture, which directly corresponds to remaining within a given hypothesis class. Second, from a computational perspective, properness can be viewed as a restriction to a class of \emph{efficiently evaluable} models: enforcing that the learner’s proposals come from a fixed hypothesis class ensures that inference at deployment time remains efficient - a concern that is particularly critical for large models operating at massive scale, where even modest increases in per-query computation can have substantial practical and resource costs. From a modeling perspective, properness also supports interpretability: in many parametrized classes, such as linear models, the magnitudes of parameters often carry semantic meaning, highlighting influential features or decision components.}

{Beyond interpretability considerations, improper learning can in fact produce hypotheses that do not correspond to any meaningful object in the underlying modeling domain. As a concrete illustration, consider learning a ranking over $k$ elements, where hypotheses correspond to total orders and counterexamples reveal misordered pairs. Identifying a ranking can be viewed as learning a binary relation (i.e., a set of ordered pairs) over the ground set. If one allows improper learning, a natural strategy is to maintain the set of rankings consistent with the observed counterexamples and to predict each pairwise comparison according to the majority vote among these rankings - an approach analogous to the halving algorithm and guaranteed to converge in $O(\log m)$ rounds, where $m$ is the size of the initial hypothesis class. However, the resulting prediction need not correspond to any total order: in particular, it may induce a directed cycle, as in the Condorcet paradox~\citep{Condorcet1785,arrow1950difficulty}. Such an output is syntactically valid from the perspective of the learning rule, yet lacks a coherent semantic interpretation as a ranking, and would naturally be rejected by a user as a hallucinated solution. Restricting the learner to propose proper hypotheses rules out such pathologies by construction.}

\paragraph{Bandit Feedback.}
In addition to studying the standard full-information equivalence-query setting, we also consider a weaker form of feedback. Rather than always assuming that the learner observes a full counterexample together with its correct label, we analyze settings in which the feedback is only partial. In particular, we consider the bandit-feedback regime, where the learner is given an unlabeled counterexample but is not informed of its correct label. While for binary labels the models are equivalent, for a multi-class setting, when the set of labels is large, this introduces a new challenge. This type of feedback naturally models user interactions such as negative reactions or rejection signals, for example when a model hallucinates or generates inconsistent content, in which case users typically flag the response as wrong without providing an explicit correction or an alternative correct output.

\begin{mdframed}[
  backgroundcolor=gray!5,
  linecolor=gray!40,
  linewidth=0.5pt,
  roundcorner=6pt,
  innertopmargin=0.8em,
  innerbottommargin=0.8em
]
\begin{center}
\textbf{Learning from Equivalence Queries with Bandit Feedback}
\end{center}
\medskip
\noindent
The learning interaction is identical to the equivalence-query model above, except that when the proposed hypothesis $h_t$ is incorrect, the environment returns an \emph{unlabeled} counterexample~$x_t$ satisfying $h_t(x_t) \neq c(x_t)$, without revealing the correct label $c(x_t)$.
\end{mdframed}

\paragraph{Overview of our contributions.}
Motivated by the above considerations, we revisit learning from equivalence queries under two feedback models: the classical full-information setting, and the weaker bandit-feedback setting. To the best of our knowledge, this is the first study of learning from equivalence queries under bandit feedback. 

To circumvent classical impossibility results for even simple hypothesis classes, while still modeling natural example-generating mechanisms, we introduce the class of \emph{symmetric adversaries}. This broad family of counterexample generators treats the learner’s hypothesis and the target concept interchangeably, and captures several natural feedback mechanisms, including random counterexamples and counterexamples selected according to fixed orderings or simplicity criteria.

Within this framework, we show that the \emph{Littlestone dimension} precisely characterizes the query complexity of learning from equivalence queries under full-information feedback: the optimal expected query complexity against symmetric adversaries is $\Theta(\mathrm{Ldim}(\mathcal{H}))$, and this bound is tight for every hypothesis class.
In the bandit-feedback setting, we establish a $\tilde\Theta(k \cdot \mathrm{Ldim}(\mathcal{H}))$ bound, where $k$ denotes the effective label complexity of the class; this bound is optimal in the sense that it is achieved by matching lower bounds for some hypothesis classes. 

Finally, our perspective also highlights several directions for future research, concerning alternative feedback models and finer structural parameters governing equivalence-query learning.

\paragraph{Organization of the paper.}
The remainder of the paper is organized as follows.
In Section~\ref{sec:main-results}, we formally define symmetric counterexample generators and state our main results for both the full-information and bandit feedback settings, together with several remarks and directions for future research.
In Section~\ref{sec:proof-overview}, we provide a technical overview of the proofs.
{The complete proofs are deferred to the appendix. Additional related work beyond that discussed in the main text is also deferred to the appendix, and is presented in Appendix~\ref{sec:addRelated}.}

\section{Main Results}\label{sec:main-results}
We begin by formally defining the class of symmetric counterexample generators, and then proceed to state our main results for both the full-information and bandit feedback settings. 

\subsection{Symmetric Counterexample Generators}

Formally, a counterexample generator is a (possibly randomized) mapping that, given the interaction history so far - namely, the previously proposed hypotheses $h_1,\ldots,h_{t-1}$ - together with a target concept $c \in \mathcal{H}$ and the learner’s current hypothesis $h \in \mathcal{H}$ with $h \neq c$, outputs a counterexample
\[
x \;=\; \mathrm{CE}(h,c \mid h_1,\ldots,h_{t-1}) \in \mathcal{X}
\quad\text{such that}\quad
h(x) \neq c(x).
\]
When the generator is randomized, $\mathrm{CE}(h,c \mid h_1,\ldots,h_{t-1})$ denotes a distribution over such counterexamples.

\begin{definition}[Symmetric counterexample generators]
A counterexample generator is said to be \emph{symmetric} if for every pair of distinct hypotheses
$h,c \in \mathcal{H}$, and for every sequence $h_1,\ldots,h_{t-1}\in \mathcal{H}$,
\[
\mathrm{CE}(h,c \mid h_1,\ldots,h_{t-1})
\;=\;
\mathrm{CE}(c,h \mid h_1,\ldots,h_{t-1}),
\]
where equality is understood in distribution when the generator is randomized.
Equivalently, a symmetric counterexample generator treats the learner’s hypothesis and the target concept interchangeably.
\end{definition}

\begin{definition}[Order-induced counterexample generators]
A natural subclass of symmetric counterexample generators is given by \emph{order-induced counterexample generators}.
Such a generator is specified by a sequence
\(x_1, x_2, \ldots\) of elements in $\mathcal{X}$.
Given a learner hypothesis $h$ and a target concept $c$, if $h \neq c$ the generator returns the first element in the sequence on which $h$ and $c$ disagree, namely
\[
x = x_i \quad\text{where}\quad i = \min\{j : h(x_j) \neq c(x_j)\}.
\]
If no such index exists (i.e., the sequence contains no disagreement), the generator accepts and the interaction terminates.
More generally, we also allow randomized order-induced counterexample generators, specified by a probability distribution $\mathcal{D}$ over such sequences:
the generator samples a sequence from $\mathcal{D}$ and then applies the rule above.
\end{definition}
For brevity, we will sometimes refer to a \emph{symmetric adversary} or an \emph{order-induced adversary} to mean that the \emph{counterexample-generation mechanism} satisfies the corresponding symmetry or order-induced property.\footnote{
Formally, an adversary may have additional power beyond counterexample generation, including the ability to choose the target concept, and in adaptive settings even to modify it across rounds, provided that the chosen target remains consistent with all counterexamples revealed so far.
Our results apply to such adaptive adversaries; the only restriction we impose is that the counterexample-generation mechanism itself is symmetric, see {Appendix~\ref{sec:adaptiveadv} for more details.}}

Order-induced counterexample generators are symmetric by construction, and capture a broad range of natural counterexample-selection rules, including random counterexamples (see below), as well as generators that return the ``simplest'' counterexample according to a given complexity measure over instances. We now illustrate these definitions with two examples: the first is the random counterexample generator, previously studied by~\cite{angluin_dohrn_2017,Bhatia22,CFR24}, which is a special case of an order-induced counterexample generator; the second is a class of decision-tree--induced counterexample generators, which are symmetric by construction and will be useful later in our analysis.

\paragraph{Example 1 (Random counterexample generators~\citep{angluin_dohrn_2017}).}
Let $\mu$ be a probability distribution over the domain $\mathcal{X}$.
A \emph{random counterexample generator} responds to an incorrect hypothesis $h$ against a target concept $c$ by sampling a point
$x \sim \mu$ conditioned on the event $h(x) \neq c(x)$, assuming that the symmetric difference of $h$ and $c$ has positive $\mu$-measure.
Such a generator is order-induced: sampling $x$ from $\mu$ conditioned on $h(x) \neq c(x)$ is equivalent to drawing an infinite i.i.d.\ sequence
$x_1, x_2, \ldots \sim \mu$ and returning the first element in the sequence that lies in the symmetric difference of $h$ and $c$.

\paragraph{Example 2 (Decision-tree--induced counterexample generators).}
A \emph{decision tree} is a rooted tree whose internal nodes are labeled by instances from $\mathcal{X}$ and whose outgoing edges are labeled by elements of the label space $\mathcal{Y}$.
Each hypothesis $h \in \mathcal{H}$ induces a (possibly partial) path in the tree: starting at the root, at each visited internal node labeled by $x \in \mathcal{X}$, the path follows the outgoing edge labeled by $h(x)$, and terminates if no such edge exists or a leaf is reached (see Figure~\ref{fig:decision-tree-ternary}).
The \emph{decision-tree--induced counterexample generator} associated with such a decision tree is defined as follows.
Given a target concept $c$ and a hypothesis $h$, consider the paths induced by $c$ and $h$.
If the two paths terminate at the same node, the generator accepts and the interaction terminates.
Otherwise, the generator returns the instance $x \in \mathcal{X}$ labeling the first node at which the two paths diverge.
This generator is symmetric, since the returned instance depends only on the unordered pair $\{h,c\}$ and the fixed decision tree.

\begin{figure}[t]
\centering
\begin{forest}
for tree={
  draw,
  circle,
  inner sep=2pt,
  s sep=7mm,
  l sep=10mm,
  edge={-latex},
}
[$x_1$
  [$x_2$, edge label={node[midway,left,draw=none,fill=white,inner sep=1pt] {$0$}}
    [, edge label={node[midway,left,draw=none,fill=white,inner sep=1pt] {$0$}}]
    [, edge label={node[midway,draw=none,fill=white,inner sep=1pt] {$1$}}]
    [, edge label={node[midway,right,draw=none,fill=white,inner sep=1pt] {$2$}}]
  ]
  [$x_3$, edge label={node[midway,draw=none,fill=white,inner sep=1pt] {$1$}}
    [, edge label={node[midway,left,draw=none,fill=white,inner sep=1pt] {$0$}}]
    [, edge label={node[midway,draw=none,fill=white,inner sep=1pt] {$1$}}]
    [, edge label={node[midway,right,draw=none,fill=white,inner sep=1pt] {$2$}}]
  ]
  [$x_4$, edge label={node[midway,right,draw=none,fill=white,inner sep=1pt] {$2$}}
    [, edge label={node[midway,left,draw=none,fill=white,inner sep=1pt] {$0$}}]
    [, edge label={node[midway,draw=none,fill=white,inner sep=1pt] {$1$}}]
    [, edge label={node[midway,right,draw=none,fill=white,inner sep=1pt] {$2$}}]
  ]
]
\end{forest}
\caption{A schematic example of a (ternary) decision tree when $\mathcal{Y}=\{0,1,2\}$. Internal nodes are labeled by instances, and edges are labeled by outcomes in $\mathcal{Y}$. Leaves correspond to prediction paths and carry no instance labels.}
\label{fig:decision-tree-ternary}
\end{figure}
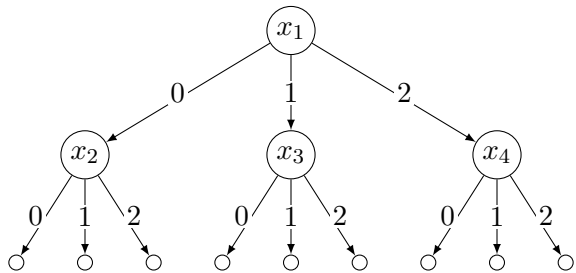
\subsection{Statement of Results}

We begin by recalling the Littlestone dimension, which will serve as the central complexity parameter for characterizing and quantifying the optimal number of rounds in learning from equivalence queries against symmetric adversaries. {We use the general form of the definition that is applicable to both multiclass and binary classification.}

\begin{definition}[Littlestone Dimension]
A \emph{mistake tree} is a rooted \underline{\emph{binary}} decision tree whose internal nodes are labeled by instances from $\mathcal{X}$.
Each internal node labeled by $x \in \mathcal{X}$ has exactly two outgoing edges, corresponding to two possible prediction outcomes.
Each hypothesis $h \in \mathcal{H}$ induces a unique root-to-leaf path in the tree by following, at each internal node labeled by $x$, the outgoing edge corresponding to the prediction~$h(x)$.
The \emph{Littlestone dimension} $\mathrm{Ldim}(\mathcal{H})$ is defined as the maximum depth of a binary mistake tree that is shattered by $\mathcal{H}$, meaning that for every root-to-leaf path in the tree there exists a hypothesis in $\mathcal{H}$ whose induced path coincides with it.
\end{definition}

The Littlestone dimension is a fundamental parameter in learning theory: it characterizes optimal mistake and regret bounds in multiclass online learning~\citep{Lit88,Ben-DavidPS09,AlonBDMNY21,Daniely15a,Hanneke23b}, the optimal number of rounds in improper learning from equivalence queries under fully adversarial counterexample generation~\citep{Lit88}, and it also plays a central role in characterizing differentially private PAC learnability~\citep{AlonLMM19,BunLM20,AlonBLMM22,GhaziG0M21,FioravantiHMST24,Lyu25}.



\begin{mdframed}[
  backgroundcolor=gray!10,
  linecolor=gray!50,
  linewidth=0.6pt,
  roundcorner=6pt,
  innertopmargin=1em,
  innerbottommargin=1em
]
\begin{theorem}[Full-information]\label{thm:full}
Let $\mathcal{H}$ be a finite hypothesis class with $\mathrm{Ldim}(\mathcal{H}) = d$, and let $\mathcal{A}$ be a symmetric adversary.
Then, there exists a learning rule such that for every target concept $c \in \mathcal{H}$, the interaction between the learner and $\mathcal{A}$ terminates after at most $O(d)$ equivalence queries in expectation.

Conversely, there exists a symmetric adversary $\mathcal{A}^\star$ such that, against $\mathcal{A}^\star$, any learning rule requires at least $\Omega(d)$ equivalence queries in expectation.
\end{theorem}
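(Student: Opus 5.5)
Both directions use the symmetry $\mathrm{CE}(h,c\mid\cdot)=\mathrm{CE}(c,h\mid\cdot)$ through a simple game-theoretic observation. Fix a history, and let $V\subseteq\mathcal{H}$ be the version space of hypotheses consistent with all counterexamples revealed so far. Suppose the learner samples $h$ from a distribution $p$ over $V$, and the target $c$ is also distributed according to $p$. Then symmetry makes the pair $(h,c)$ exchangeable, and the counterexample $x$ has the same distribution under both roles. Both $h(x)$ and $c(x)$ are revealed (full information), and they differ. So, conditioned on $x$ and on the unordered pair of labels, each of the two labels is equally likely to be the true one.

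\textbf{Upper bound.} I will use a potential argument with the minimax principle. The potential is the multiclass Littlestone dimension of the version space, $\Phi(V)=\mathrm{Ldim}(V)\le d$. The key one-step claim is as follows. For every finite $V$ with $|V|\ge 2$ there is a distribution $p$ on $V$ such that, for every target $c\in V$ and every history, if $h\sim p$ is queried then, with probability at least some constant $\alpha>0$ over $h$ and the adversary's randomness, either $h=c$ (acceptance) or the revealed counterexample $(x,c(x))$ yields $\mathrm{Ldim}(V_{x\to c(x)})<\mathrm{Ldim}(V)$. Since $\Phi$ never increases and drops with probability $\ge\alpha$ per round, the expected number of queries is $O(d/\alpha)=O(d)$.

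To prove the claim I will apply von Neumann's minimax theorem to the finite zero-sum game. The learner picks $h\in V$, the adversary picks $c\in V$, and the payoff is the probability of progress; the symmetric generator fixes how $x$ is produced. By minimax, it suffices to show that for every distribution $q$ of $c$ over $V$ the learner has a good response. The natural response is to play $h\sim q$ as well, which makes the pair exchangeable. At a counterexample $x$ the two revealed labels are $y\ne y'$, and since $\mathrm{Ldim}(V_{x\to y})$ and $\mathrm{Ldim}(V_{x\to y'})$ cannot both equal $\mathrm{Ldim}(V)$ (that is the standard Littlestone argument), at least one of the two labels is a progress label. By exchangeability, the true label is that progress label with probability exactly $1/2$, conditioned on $x$ and $\{y,y'\}$. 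Together with acceptance when $h=c$, this gives $\alpha\ge 1/2$.

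The main obstacle is the subtlety that the minimax game needs a payoff independent of the history-dependence of the adversary. I will handle this by fixing the history and applying minimax afresh each round. The per-round guarantee is uniform over $c$, so it also survives an adaptive choice of target.

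\textbf{Lower bound.} I will take $\mathcal{A}^\star$ to be the decision-tree-induced generator of Example 2, built on a shattered mistake tree $T$ of depth $d$, and extended to arbitrary pairs by also comparing paths. The target is a uniformly random leaf, realised by some $c\in\mathcal{H}$. The generator returns the first node where the paths of $h$ and $c$ diverge, which reveals only that $c$ follows the other branch there. Let $j$ be the current depth down to which the path of $c$ is known. Whatever $h$ the learner proposes, conditioned on the information so far, the path of $c$ below depth $j$ is uniform. So $h$ agrees with $c$ at the next node with probability at most $1/2$, and each query advances the known depth by a geometric amount with constant mean. The interaction cannot terminate before the leaf is identified, because termination requires the two paths to end at the same node. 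Hence $\Omega(d)$ queries are needed in expectation. By Yao's principle this holds for every learner, even deterministic ones, so some fixed target is hard.
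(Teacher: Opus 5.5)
Your proposal is correct and takes essentially the same approach as the paper. For the upper bound, your exchangeability argument with the imitation response $h\sim q$ is the paper's inequality $\mathrm{Payoff}(h,c)+\mathrm{Payoff}(c,h)\le 1$, which rests on $\mathrm{Ldim}(V_{x\to h(x)})$ and $\mathrm{Ldim}(V_{x\to c(x)})$ not both equalling $\mathrm{Ldim}(V)$ (so progress has probability at least, not exactly, $1/2$); for the lower bound, your tree-induced adversary with a uniformly random leaf target is the paper's LCA adversary, where bounding the per-round expected advance of the known depth by $2$ and summing over rounds gives $\mathbb{E}[T]\ge d/2$.
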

\end{mdframed}

The special case in which the adversary returns random counterexamples was studied by~\cite{angluin_dohrn_2017}, who proved an $O(\log|\mathcal{H}|)$ upper bound, and by~\cite{Bhatia22}, who provided an alternative algorithm achieving similar guarantees. More recently,~\cite{CFR24} extended the approach of~\cite{angluin_dohrn_2017} to obtain an $\mathrm{Ldim}$-based upper bound for random counterexamples. Our proof applies uniformly to all symmetric adversaries and relies on a game-theoretic formulation together with the minimax theorem, yielding a randomized learning rule. In the special case of random counterexamples, where prior algorithms are deterministic, we additionally provide an alternative proof for a deterministic learning rule. Our proof is shorter and simple, and is presented in the proof overview section. We next turn to the bandit-feedback setting.

\begin{mdframed}[
  backgroundcolor=gray!10,
  linecolor=gray!50,
  linewidth=0.6pt,
  roundcorner=6pt,
  innertopmargin=1em,
  innerbottommargin=1em
]
\begin{theorem}[Bandit feedback]\label{thm:bandit}
Let $\mathcal{H}$ be a finite hypothesis class with $\mathrm{Ldim}(\mathcal{H}) = d$, and let~$\mathcal{A}$ be a symmetric adversary.
Assume the label space $Y$ has size $\lvert Y\rvert = k$.
Then, there exists a learning rule such that for every target concept $c \in \mathcal{H}$, the interaction between the learner and $\mathcal{A}$ terminates after at most $O(d \cdot k\cdot\log k)$ equivalence queries with bandit feedback in expectation.

This bound is tight in the following sense: for every $k$ and $d$, there exists an hypothesis class $\mathcal{H}$ over a label space of size $k$ and a symmetric adversary $\mathcal{A}^\star$ such that, for any learning rule, the interaction between the learner and $\mathcal{A}^\star$ requires at least $\Omega(d \cdot k\cdot\log k)$ equivalence queries with bandit feedback in expectation.
\end{theorem}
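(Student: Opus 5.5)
The plan splits into an upper bound and a lower bound. The upper bound reduces bandit feedback to the full-information result of Theorem~\ref{thm:full}, paying a factor $O(k\log k)$ to recover labels. The lower bound is a direct product construction: $d$ independent "label-guessing" games, each of which costs $\Omega(k\log k)$ rounds.

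\emph{Upper bound.} I would follow the game-theoretic route used for Theorem~\ref{thm:full}. Fix the current version space $V$, namely the hypotheses consistent with all unlabeled counterexamples seen so far. An unlabeled counterexample $x$ for the proposal $h$ reveals only that $c(x)\neq h(x)$, so it eliminates the hypotheses $g\in V$ with $g(x)=h(x)$.

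The key step is a progress lemma. For every $V$ there is a distribution $P$ over $h\in V$ such that, for every target $c\in V$ and every symmetric generator, the expected drop in a potential $\Phi(V)$ is at least $1/(k\log k)$ of a unit. I would take $\Phi$ to be the (multiclass) Littlestone dimension, or a smoothed version such as a weighted count along a shattered tree.

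Symmetry is what makes this work. Consider the zero-sum game in which the learner picks $h$ and the adversary picks $c$ together with the counterexample. Because $\mathrm{CE}(h,c)=\mathrm{CE}(c,h)$, the same point $x$ is returned for the swapped pair. By minimax it therefore suffices to exhibit a good learner response against any fixed distribution $Q$ over targets. Against $Q$ I would play $h\sim Q$ itself, so that $h$ and $c$ are exchangeable.

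Under exchangeability, the point $x$ separates $h$ from $c$. With full labels, one of the two branches at $x$ loses at least half its "dimension mass", and this is the argument behind Theorem~\ref{thm:full}. With bandit feedback we only learn $c(x)\neq h(x)$, which removes only the label class $h(x)$. The $k$ labels at $x$ split $V$ into $k$ parts $V_1,\dots,V_k$. When $h$ and $c$ are independent draws from $Q$ conditioned on disagreeing at $x$, the removed part is $V_{h(x)}$. This is a size-biased choice, and $\Pr[c\in V_j]$ is proportional to $Q(V_j)(1-Q(V_j))$.

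A coupon-collector/harmonic argument then shows that, over $O(k\log k)$ rounds, either some part containing the true label survives while all competitors are removed, or enough mass is removed to drop $\Phi$. In the effective-$k$ sense this identifies the branch of $c$ at a node of the shattered tree. Multiplying by the $O(d)$ descents from Theorem~\ref{thm:full} gives $O(dk\log k)$.

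\emph{Main obstacle.} The delicate step is making the potential argument rigorous when different rounds return different points $x$. We need a single potential that aggregates partial eliminations across many points. I would first try $\Phi(V)=\sum_{g\in V} w(g)$, with weights that are exponential in rank along the Littlestone tree (an SOA-style weighting). I would then show an expected multiplicative decrease of $1-\Omega(1/(k\log k))$ per round, using the exchangeability of $h$ and $c$ to lower bound the probability that the removed class holds a constant fraction of weight not belonging to the target's class.

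\emph{Lower bound.} Take $\mathcal{X}=\{x_1,\dots,x_d\}$ and $\mathcal{H}=[k]^d$, which has Littlestone dimension $d$ with branching $k$. Draw the target uniformly. Use the order-induced adversary with the fixed order $x_1,x_2,\dots$; this generator is symmetric.

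The learner then cannot receive information about coordinate $i+1$ until coordinate $i$ is correct. At each coordinate it faces a game in which it guesses labels, and each wrong guess is simply rejected. That alone gives $\Omega(k)$ rounds per coordinate, since each wrong guess eliminates one label. To reach $k\log k$, I would instead use a random order-induced adversary whose order is a uniformly random permutation. Each coordinate then needs its correct label found, and each round reveals a rejection at a random wrong coordinate, as in coupon collection over the $d(k-1)$ (coordinate, wrong label) pairs. A standard coupon-collector lower bound gives $\Omega(dk\log k)$ when $d\ge k$. For $d<k$ I would blow up each coordinate into a block, so that the effective per-coordinate game itself requires $\Omega(k\log k)$ rounds.
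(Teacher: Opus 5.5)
\textbf{Lower bound.} Your class $\mathcal{H}=[k]^d$ cannot witness the $\log k$ factor, whatever the counterexample generator. Keep a candidate set $S_i\subseteq[k]$ for each point $x_i$. Always propose some $h$ with $h(x_i)\in S_i$ for all $i$, and delete $h(x_i)$ from $S_i$ when $x_i$ is returned. The target's label never leaves $S_i$, and each rejection lowers $\sum_i|S_i|$ by one. So this proper learner finishes within $d(k-1)+1$ queries, even against a fully adversarial generator. The coupon-collector analogy breaks because the learner never re-proposes a refuted label: every rejection is a fresh (point, label) pair, so there are no repeated coupons. To get the $\log k$ factor you need a class in which a counterexample is nearly independent of the target, so that each round carries only $O(1/k)$ bits. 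The paper uses nonzero linear functionals on $\mathbb{F}_p^d$ with $p=\Theta(k)$ prime, and returns a uniformly random point of the disagreement set, which has size $(p-1)p^{d-1}$ out of $p^d$. Each round then leaks at most $\log\frac{p}{p-1}$ bits about $C$ beyond the entropy of the stop flags. Markov's inequality forces the first $2q$ rounds to carry at least $\frac{d\log p}{2}$ bits, which gives $q=\Omega(dp\log p)$.

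\textbf{Upper bound.} The progress lemma is the entire content of the proof, and neither of your candidate potentials satisfies it. With $\Phi=\mathrm{Ldim}$ of the hypotheses consistent with the negative information, there is no per-round progress: in $[k]^d$ with $k\ge 3$, a single elimination leaves the dimension equal to $d$. With a weighted count $\sum_{g\in V}w(g)$ whose weights are fixed in advance, a multiplicative-decrease argument only bounds the number of rounds via $\log(\sum_g w(g)/w(c))$. For singletons over $n$ points ($d=1$, $k=2$) this is at least $\log n$ for some target. ``Multiplying by the $O(d)$ descents of Theorem~\ref{thm:full}'' presupposes restricting to $V_{x\to c(x)}$, which bandit feedback does not permit. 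Your formula for $\Pr[c\in V_j]$ also treats $x$ as fixed, whereas a symmetric generator chooses $x$ as an arbitrary function of $\{h,c\}$. What the imitation argument actually needs is a potential whose changes under $(h,c)$ and $(c,h)$ sum to at most $-\Omega(1/k)$ \emph{for every} $x$ with $h(x)\neq c(x)$.

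The paper obtains this as follows. It keeps a distribution $\mu_t$ over label sequences $s\in[k]^t$ for the points seen so far, each sequence running SOA on its own version space. On counterexample $x$ with refuted label $i$, extensions of $s$ by $i$ get weight $0$. The SOA-predicted extension $j\neq i$ is multiplied by $1-\frac{k-1}{k^3}+\frac{\kappa_t(x,i)}{k\,\kappa_t(x,j)}$, where $\kappa_t(x,j)$ is the $\mu_t$-mass of sequences whose SOA prediction is $j$. All remaining extensions are multiplied by $k^{-3}$. The potential is $\Phi_t(c)=(8\ln k)L_t(c)-\ln\mu_t(c)$, where $L_t(c)$ is the Littlestone dimension of the hypotheses agreeing with $c$ on the observed points and $\mu_t(c)$ is the weight of $c$'s label sequence. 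If either dimension drops, the $8\ln k$ decrease outweighs the total increase of at most $6\ln k$ in $-\ln\mu$ over the two sides. Otherwise one of $\kappa_t(x,c(x))/\kappa_t(x,h(x))$ and its reciprocal is at least $1$. This yields a weight factor $1+\Omega(1/k)$ on one side against at worst $1-k^{-2}$ on the other. Since $\Phi_0=8d\ln k$ and $\Phi_t\ge 0$, minimax with imitation gives $O(dk\log k)$ expected queries.
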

\end{mdframed}
To the best of our knowledge, the bandit-feedback variant of learning from equivalence queries has not been studied previously.
The algorithm and analysis in this setting are considerably more involved than in the full-information case.
They rely on a careful adaptive weighting scheme over an appropriate collection of splitting experts, and require finer control over the limited information revealed by each counterexample.
We provide further details in the proof overview.


\paragraph{Infinite classes.}
Although our bounds do not depend on the domain size or the hypothesis class, our theorems assume that the hypothesis class $\mathcal{H}$ is finite.
This assumption is essential, as illustrated by the following simple example.
Let $\mathcal{H}$ be the class of singleton concepts over the natural numbers
$\mathbb{N}=\{1,2,3,\ldots\}$, where each hypothesis labels exactly one point as positive.
Note that the Littlestone dimension of $\mathcal{H}$ is 1.
Let the adversary be a random counterexample generator with respect to a distribution
$\mu$ over $\mathbb{N}$ that has full support, for example $\mu(n)=2^{-n}$. 
Consider any learning rule~$\mathcal{L}$; for simplicity, assume that $\mathcal{L}$ is deterministic.
Let $n_1$ be such that the learner’s first hypothesis is the singleton
$h_1=\mathbf{1}_{\{x=n_1\}}$.
For any $\varepsilon>0$, there exists $n_\varepsilon$ sufficiently large such that if the target
concept is $c=\mathbf{1}_{\{x=n_\varepsilon\}}$, then with probability at least $1-\varepsilon$
the returned counterexample is $x=n_1$, yielding labeled feedback $(n_1,0)$.
Let $n_2,n_3,\ldots$ denote the indices of the subsequent singleton hypotheses proposed by the
learner under the repeated feedback $(n_1,0),(n_2,0),\ldots$.
By choosing the target index $n_\varepsilon$ sufficiently large relative to
$n_1,n_2,\ldots,n_k$, the adversary can ensure that, with arbitrarily high probability,
the interaction continues for at least $k$ rounds.
Thus, no uniform bound on the number of equivalence queries in expectation is possible for infinite hypothesis classes.
A similar argument holds when $\mathcal{L}$ is randomized, by considering the distributions over the indices $n_1,\ldots,n_k$ and picking the index of the target concept sufficiently far away in the tail of these distributions.

\paragraph{Future Research.}
A first natural open problem concerns the gap between the full-information and bandit-feedback settings.
In the full-information case, our characterization is tight in a strong sense: for every hypothesis class~$\mathcal{H}$, the optimal number of equivalence queries against symmetric adversaries is $\Theta(\mathrm{Ldim}(\mathcal{H}))$.
In contrast, in the bandit-feedback setting, while our upper bound of $O(\mathrm{Ldim}(\mathcal{H})\cdot k \cdot \log k)$ applies to all classes, the matching lower bound is currently witnessed only by specific constructions.
Bridging this gap and obtaining a tight, class-dependent characterization of the bandit-feedback query complexity remains an intriguing open question.
A natural candidate for such a characterization is the \emph{bandit Littlestone dimension}~\citep{DanielyH13}, which is known to characterize the equivalence-query complexity in the fully adversarial bandit setting.
We note that $O(\mathrm{Ldim}(\mathcal{H})\cdot k \cdot \log k)$~\citep{AuerLong1999},
{and thus an upper bound expressed in terms of the bandit Littlestone dimension would strictly refine our bound.}

{Due to space constraints, we defer further directions for future research to Appendix~\ref{sec:addfuture}.}

\section{Proof Overview}\label{sec:proof-overview}

\subsection{Upper Bounds}

We begin by outlining the proof of the upper bound in the full-information setting.
This case admits a simple and intuitive argument, and several of its ideas will reappear in the bandit-feedback setting discussed later.

\subsubsection{Full Information}

We begin with an intuitive special case that highlights the main idea of the proof.
We assume that the adversary returns a \emph{random counterexample}, drawn from an arbitrary distribution supported on the symmetric difference between the learner’s hypothesis and the target concept, and show that the expected number of equivalence queries is $O(\log |\mathcal{H}|)$.
{This special case was previously studied in the binary setting by~\citep{angluin_dohrn_2017,Bhatia22,CFR24}.}
We present an alternative proof that is short and conceptually simple, relying on a direct minimax argument rather than the more technical case analysis used in prior work.
Importantly, the Littlestone-dimension-based upper bound and the extension to arbitrary symmetric adversaries follow from the same argument with only minor modifications.

Let $V \subseteq \mathcal{H}$ denote the current \emph{version space}, i.e., the set of hypotheses consistent with all counterexamples observed so far.
Initially, $V=\mathcal{H}$.
It suffices to show that in each round the learner can select an hypothesis such that, with constant probability, the version space shrinks by a constant factor.
Indeed, this implies that, after $O(\log \lvert \mathcal{H}\rvert)$ rounds in expectation, only the target hypothesis remains in the version space.

\paragraph{A zero-sum game.}
Fix the current version space~$V$.
We define a zero-sum game between two players: the learner~$L$ and the adversary~$A$.
Both players have the same set of pure strategies, namely the hypotheses in~$V$.
For a labeled example $(x,y) \in \mathcal{X}\times\mathcal{Y}$, let
\(
V_{x \to y} \;:=\; \{h' \in V : h'(x)=y\}
\)
denote the subset of hypotheses in the current version space that predict label~$y$ on~$x$.
If the learner plays $h \in V$ and the adversary plays $c \in V$, the payoff to the adversary is defined as
\begin{equation}\label{eq:payoff}
\mathrm{Payoff}(h,c)
\;=\;
\Pr_{x \sim h \,\Delta\, c}
\Bigl[
\lvert V_{x \to c(x)} \rvert \;>\; \tfrac{1}{2}\lvert V\rvert
\Bigr],
\end{equation}
where $h \,\Delta\, c$ denotes the \emph{symmetric difference} between $h$ and $c$, that is, the set of instances
$x$ for which $h(x)\neq c(x)$.
The probability is taken with respect to the adversary’s distribution over counterexamples supported on this set.
In the degenerate case $h=c$, we define the payoff to be~$0$.

The learner seeks to minimize this payoff.
Indeed, the event inside the probability corresponds to the situation in which, upon receiving a counterexample~$x$,
a strict majority of the hypotheses in the current version space agree with the target concept~$c$ on~$x$.
Equivalently, the learner aims to maximize the probability of the complementary event, in which
$\lvert V_{x \to c(x)} \rvert \le \tfrac{1}{2}\lvert V\rvert$.
In this complementary event, the version space shrinks by a factor of at least~$2$, since all hypotheses that disagree with the target on~$x$ are eliminated.
\paragraph{A symmetry property.}
The key structural property of the game is the following inequality: for every $h,c \in V$,
\begin{equation}\label{eq:symmetry}
\mathrm{Payoff}(h,c) + \mathrm{Payoff}(c,h) \;\le\; 1.
\end{equation}
When $h=c$, this is immediate since the payoff is defined to be~$0$.
When $h \neq c$, observe that for a counterexample $x \sim h \,\Delta\, c$, the two events
``$\lvert V_{x \to c(x)} \rvert > \tfrac{1}{2}\lvert V\rvert$''
and
``$\lvert V_{x \to h(x)} \rvert > \tfrac{1}{2}\lvert V\rvert$''
are disjoint, since $h(x)\neq c(x)$.
Therefore, the probabilities of these two events sum to at most~$1$, proving~\eqref{eq:symmetry}.

By linearity, the same inequality holds for mixed strategies:
for any distributions $p,q$ over~$V$,
\[
\mathrm{Payoff}(p,q) + \mathrm{Payoff}(q,p) \;\le\; 1.
\]

\paragraph{Minimax argument.}
We now invoke von Neumann’s minimax theorem~\citep{vonNeumann1928,cesaBianchiLugosi2006}.
By the minimax identity, to show that the learner can guarantee adversary payoff at most~$\tfrac{1}{2}$, it suffices to show that for every (possibly mixed) adversary strategy~$q$, there exists a learner strategy~$p$ such that
\[
\mathrm{Payoff}(p,q) \;\le\; \tfrac{1}{2}.
\]
This follows directly from~\eqref{eq:symmetry} by taking $p=q$.
Indeed,
\[
2\,\mathrm{Payoff}(q,q)
\;\le\;
\mathrm{Payoff}(q,q)+\mathrm{Payoff}(q,q)
\;\le\;
1,
\]
and hence $\mathrm{Payoff}(q,q)\le\tfrac{1}{2}$.

We conclude that there exists a (possibly randomized) learning rule such that in each round, with probability at least~$\tfrac{1}{2}$, the version space shrinks by a factor of at least~$2$.
It follows that the interaction terminates after $O(\log \lvert \mathcal{H}\rvert)$ equivalence queries in expectation.
{In fact, this argument yields a slightly stronger guarantee: the same bound continues to hold even if the adversary adaptively changes the target concept across rounds, as long as the counterexample-generation mechanism remains random and each chosen target is consistent with all counterexamples revealed so far.}



\paragraph{General case.}
We next explain how the above argument extends from random counterexamples to arbitrary symmetric adversaries, and how the logarithmic dependence on $m$ is replaced by a bound in terms of the Littlestone dimension.

Let $V \subseteq \mathcal{H}$ denote the current version space, and write $d=\mathrm{Ldim}(V)$. We redefine the payoff of the game in~\eqref{eq:payoff} so that the learner is penalized when the counterexample leaves the adversary with a \emph{full-dimensional} continuation.
Formally, for $h,c \in V$, define
\begin{equation}\label{eq:payoff-ldim}
\mathrm{Payoff}(h,c)
\;:=\;
\Pr_{x \sim \mathcal{A}(\text{history},c,h)}
\Bigl[
\mathrm{Ldim}\!\bigl(V_{x \to c(x)}\bigr)=\mathrm{Ldim}(V)
\Bigr],
\end{equation}
with $\mathrm{Payoff}(h,h)=0$.
Here the probability is taken over the counterexample distribution induced by the adversary~$\mathcal{A}$. The motivation behind~\eqref{eq:payoff-ldim} is that $\mathrm{Ldim}(\cdot)$ serves as a refined measure of the ``size'' of the version space.
Indeed, the Littlestone dimension can be viewed as a surrogate for the logarithm of the cardinality of $V$: for finite classes it always satisfies $\mathrm{Ldim}(V)\le \log |V|$, yet it may be much smaller and remains meaningful even when $V$ is infinite.
Accordingly, driving $\mathrm{Ldim}(V)$ down by one represents concrete progress in the learning process, regardless of the actual size of the version space.

The crucial observation is simple.
Fix $h\neq c$ and any counterexample $x$ with $h(x)\neq c(x)$.
Then it cannot be that \emph{both} subsets $V_{x \to h(x)}$ and $V_{x \to c(x)}$ have Littlestone dimension~$d$.
Indeed, if $\mathrm{Ldim}(V_{x \to h(x)})=\mathrm{Ldim}(V_{x \to c(x)})=d$, then $V$ would shatter a mistake tree of depth $d+1$ by placing $x$ at the root and attaching depth-$d$ shattered subtrees below the two outgoing edges - contradicting $\mathrm{Ldim}(V)=d$.

Consequently, for every pair $h,c \in V$ we have the analogue of the symmetry inequality,
\begin{equation}\label{eq:symmetry-ldim}
\mathrm{Payoff}(h,c) + \mathrm{Payoff}(c,h) \;\le\; 1,
\end{equation}
since the two events ``$\mathrm{Ldim}(V_{x \to c(x)})=d$'' and ``$\mathrm{Ldim}(V_{x \to h(x)})=d$'' are disjoint for any fixed counterexample~$x$.
As before, linearity extends~\eqref{eq:symmetry-ldim} to mixed strategies, and von Neumann's minimax theorem implies that the learner has a strategy ensuring $\mathrm{Payoff}(h_t,c)\le \tfrac12$ against every $c \in V$.

Interpreting this guarantee, with probability at least $1/2$ the returned counterexample $x$ satisfies
$\mathrm{Ldim}(V_{x \to c(x)}) \le d-1$, and hence after updating the version space to
$V \leftarrow V_{x \to c(x)}$, the Littlestone dimension drops by at least one with constant probability.
It follows that the expected number of rounds until $\mathrm{Ldim}(V)$ reaches~$0$ is $O(\mathrm{Ldim}(\mathcal{H}))$.

\paragraph{Deterministic learners.}
While our minimax argument applies to arbitrary symmetric adversaries, it yields a \emph{randomized} learning rule.
In the special case of \emph{random counterexamples} (i.e., counterexamples drawn from a fixed distribution $\mu$ conditioned on the disagreement set), prior works~\citep{angluin_dohrn_2017,CFR24} provide deterministic learning rules via a more involved graph-theoretic analysis.
Here we outline a short and simple alternative argument which yields a deterministic choice and avoids the technical machinery of these works.

Fix the current version space $V\subseteq \mathcal{H}$ and let $d=\mathrm{Ldim}(V)$. Define, for each $h\in V$, the score
\[
\mathrm{score}(h)
\;:=\;
\Pr_{x\sim \mu}\bigl[\mathrm{Ldim}(V_{x\to h(x)})=d\bigr],
\]
and let $h^\star\in\arg\max_{h\in V}\mathrm{score}(h)$.
We claim that submitting $h^\star$ guarantees constant progress against every target $c\in V$. Fix $c\in V$ with $c\neq h^\star$ and let $E$ denote the event $h^\star(x)=c(x)$.
By the law of total probability,
\begin{align*}
\mathrm{score}(h^\star)
&=
\Pr[E]\cdot \Pr\!\bigl[\mathrm{Ldim}(V_{x\to h^\star(x)})=d \mid E\bigr]
\;+\;
\Pr[\neg E]\cdot \Pr\!\bigl[\mathrm{Ldim}(V_{x\to h^\star(x)})=d \mid \neg E\bigr],\\
\mathrm{score}(c)
&=
\Pr[E]\cdot \Pr\!\bigl[\mathrm{Ldim}(V_{x\to c(x)})=d \mid E\bigr]
\;+\;
\Pr[\neg E]\cdot \Pr\!\bigl[\mathrm{Ldim}(V_{x\to c(x)})=d \mid \neg E\bigr].
\end{align*}
On $E$ we have $h^\star(x)=c(x)$ and hence the first terms in the two displays are identical.
Since $\mathrm{score}(h^\star)\ge \mathrm{score}(c)$ by definition of $h^\star$, cancelling the common first term yields
\begin{equation}\label{eq:score-cancel}
\Pr\!\bigl[\mathrm{Ldim}(V_{x\to h^\star(x)})=d \mid \neg E\bigr]
\;\ge\;
\Pr\!\bigl[\mathrm{Ldim}(V_{x\to c(x)})=d \mid \neg E\bigr].
\end{equation}
Now condition on $\neg E$, i.e., on drawing $x$ from the disagreement set $\{x: h^\star(x)\neq c(x)\}$.
By the key Littlestone observation used above, for any counterexample $x$ with
$h^\star(x)\neq c(x)$, it cannot be the case that both continuations
$V_{x\to h^\star(x)}$ and $V_{x\to c(x)}$ retain full Littlestone dimension~$d$.
Consequently, the conditional probabilities that each of these two continuations
has Littlestone dimension~$d$ sum to at most~$1$.
Combined with~\eqref{eq:score-cancel}, this implies that
\[\Pr\!\bigl[\mathrm{Ldim}(V_{x\to c(x)})=d \mid h^\star(x)\neq c(x)\bigr] \le \tfrac{1}{2}.\]
In other words, when the adversary returns a random counterexample conditioned on
disagreement, the Littlestone dimension drops with probability at least~$1/2$.
This yields a deterministic constant-progress step, and leads to an $O(d)$ bound in expectation in the random-counterexample case.
\subsubsection{Bandit Feedback}
We next outline the proof ideas for the bandit-feedback setting.
Both the algorithm and its analysis are significantly more involved than in the full-information case, and build crucially on the minimax perspective developed above.
Accordingly, we provide only a high-level overview here and refer the reader to the full proof for technical details.



\paragraph{High-level description.}
The main challenge in the bandit-feedback setting is the limited information revealed by each counterexample.
When the learner submits an hypothesis $h_t$ and receives an unlabeled counterexample $x_t$, it only learns that $h_t(x_t)$ is incorrect, without learning the true label~$c(x_t)$. To address this, we maintain a separate hypothesis for \emph{each possible labeling} of the observed counterexamples.
Concretely, suppose the learner has so far received counterexamples $x_1,\ldots,x_t$.
For every label sequence $\vec{y}=(y_1,\ldots,y_t)\in\mathcal{Y}^t$, we define an associated expert that assumes $c(x_i)=y_i$ for all $i\le t$.
This expert corresponds to the version space
\[
V_{\vec{y}} \;=\; \{h\in\mathcal{H} : h(x_i)=y_i \text{ for all } i\le t\},
\]
and predicts using the Standard Optimal Algorithm applied to $V_{\vec{y}}$.\footnote{
Recall that the Standard Optimal Algorithm predicts a label $y$ maximizing $\mathrm{Ldim}(V_{x\to y})$ over all $y'\in\mathcal{Y}$.
}

The learner maintains weights over this evolving collection of experts and updates them upon receiving each new counterexample.
The update rule is guided by three principles:
(i) experts that predict the same label as the learner’s submitted hypothesis on $x_t$ are penalized, since this label is known to be incorrect;
(ii) experts that predict a different label are rewarded, with \emph{smaller-weight experts receiving a larger relative reward}. This reflects the goal of rapidly boosting the weight of the version space consistent with the target concept, even when it initially has negligible mass; and
(iii) experts corresponding to version spaces with larger Littlestone dimension are prioritized, as these retain greater learning potential.

At each round, the learner submits an hypothesis corresponding to the minimax strategy of a game analogous to that defined in the full-information setting, where payoffs are determined by a weighted aggregation of the experts’ predictions.


\paragraph{Analysis overview.}
{The analysis follows a win-win perspective: at each round, either the Littlestone dimension of the expert corresponding to the true label sequence drops, or the weight assigned to that expert increases.
The Littlestone dimension can decrease at most $d=\mathrm{Ldim}(\mathcal{H})$ times.
On the other hand, the reweighting scheme ensures that the weight of the true expert cannot increase too often without triggering such a dimension drop, and in fact this can happen only $\tilde O(k)$ times.
Combining these two effects yields the stated upper bound of $O(d\,k\log k)$ equivalence queries in expectation.
This dichotomy is formally embedded in the analysis via a careful definition of the payoff function in the underlying game on which the minimax strategy is applied.}

\subsection{Lower Bounds}
\paragraph{Full Information.}
For simplicity, consider the binary-labeled case $\mathcal{Y}=\{0,1\}$.
Let $T$ be a shattered mistake tree for $\mathcal{H}$ of maximum depth
$d=\mathrm{Ldim}(\mathcal{H})$. We define a deterministic symmetric adversary $\mathcal{A}^\star$ as follows.
Given a learner hypothesis $h$ and a target $f$, let $\pi_h$ and $\pi_f$ denote the (unique) root-to-leaf paths in $T$ induced by $h$ and $f$, respectively. If $\pi_h=\pi_f$, then $h=f$ on all instances appearing in the tree and the interaction terminates. Otherwise, $\mathcal{A}^\star$ returns the instance labeling the least common ancestor of the two paths, equivalently the node at which the paths first diverge.

To prove the lower bound, fix for each leaf $\ell$ of $T$ an hypothesis
$f_\ell\in\mathcal{H}$ consistent with the root-to-$\ell$ path, and draw the target $f$ uniformly at random from the set $\{f_\ell : \ell \text{ is a leaf of } T\}$. For any learner, its first query $h_1$ induces some path $\pi_{h_1}$. The random target path $\pi_f$ then agrees with $\pi_{h_1}$ for a geometrically distributed number of levels: with probability $1/2$ the paths diverge at the root, with probability $1/4$ they diverge one level below, and so on. Consequently, the expected depth of the returned counterexample - and hence the expected progress down the depth-$d$ tree - is $O(1)$ per round. A standard progress (or optional stopping) argument then implies that the expected number of rounds before reaching depth $d$ is $\Omega(d)$. This establishes the full-information lower bound.

\paragraph{Bandit Feedback.}
We conclude with a lower bound for learning from equivalence queries with bandit feedback.
The construction is based on an hypothesis class that has previously been used to derive lower bounds on the mistake complexity of bandit online learning, which is equivalent to learning from equivalence queries under fully adversarial counterexample generation
\citep{Long20,Geneson21}.
We explain how the same construction yields a matching lower bound in our setting of symmetric (random) counterexample generation.

Fix a prime number $p$, and let the label space be $\mathcal{Y}=\mathbb{F}_p$.
Let the instance space be $\mathcal{X}=\mathbb{F}_p^d$, and let $\mathcal{H}$ be the class of all nonzero linear functionals from $\mathcal{X}$ to $\mathcal{Y}$.
It is known that $\mathrm{Ldim}(\mathcal{H})=d$.
Intuitively, learning in this class with bandit feedback is difficult because each counterexample reveals only that a single label is incorrect, while eliminating only a small fraction of the remaining hypotheses.
To see this, consider the dual viewpoint in which the target hypothesis corresponds to an unknown vector $f \in \mathbb{F}_p^d$.
At each round, the learner proposes an hypothesis $h$, and upon receiving a counterexample $x$, learns only that $h(x)\neq f(x)$.
Equivalently, this removes from consideration a random affine hyperplane that is consistent with $h(x)$ but inconsistent with the true value $f(x)$.
When counterexamples are generated at random, such a step reduces the logarithm of the number of remaining candidate hypotheses by only about $O(1/p)$ in expectation.
Initially, this logarithm is on the order of $d \log p$.
Since $\mathrm{Ldim}(\mathcal{H})=d$ and $|\mathcal{Y}|=p$, this yields a lower bound of
$\Omega\!\bigl(\mathrm{Ldim}(\mathcal{H}) \cdot |\mathcal{Y}| \log |\mathcal{Y}|\bigr)$
equivalence queries with bandit feedback, matching our upper bound up to constant factors.

{The lower bound argument is inherently information-theoretic.
Rather than tracking the evolution of the version space combinatorially, the proof quantifies progress in terms of the mutual information between the observed bandit counterexamples and the unknown target concept.
Each counterexample reveals only that a single label is incorrect, and therefore contributes only a small amount of mutual information about the target hypothesis.
This information-theoretic viewpoint provides a convenient and principled framework for formalizing the intuition that random counterexamples eliminate only a small fraction of the remaining hypotheses at each round, and is what ultimately yields the stated lower bound.}

\section{Additional Related Work}\label{sec:addRelated}
{The classical framework of learning with equivalence queries was introduced by Angluin, who studied exact learning via equivalence and membership queries and established foundational results for this model~\cite{angluin_eq,Angluin87regularsets}. In addition to considering less adversarial counterexample generation, our model also departs from the classical formulation by allowing the learning process to terminate before exact identification of the target concept. A broader and more recent perspective on query learning, including connections to other domains such as formal verification and model inference, is provided by the survey of~\cite{Vaandrager17survey}. A conceptually related but distinct line of work arises in machine teaching, where the feedback provided to the learner is typically non-adversarial and may be chosen cooperatively or strategically by a teacher. We refer to the survey of~\citet*{zhu2018teachingoverview} for an overview of this literature. Particularly relevant to our setting is the work of~\citet*{KumarCS21}, which studies learning with equivalence queries under best-case counterexamples selected by a teacher, in contrast to the worst-case counterexamples considered in the classical model.}

\section{Additional Directions for Future Research}\label{sec:addfuture}

This appendix outlines several additional directions for future research. A first direction concerns relaxing the model to obtain meaningful guarantees for natural hypothesis classes with infinite Littlestone dimension. One avenue arises from restricting attention to \emph{order-induced adversaries}. Recall that such adversaries are specified by a (possibly randomized) sequence of instances and return the first element in the sequence on which the current hypothesis and the target disagree. Intuitively, when the returned counterexample appears far along the underlying sequence - i.e., has large index or complexity - this indicates that the current hypothesis performs well with respect to the ordering.
This intuition is particularly transparent in the case of random counterexamples: if counterexamples are generated by sampling i.i.d.\ from a distribution over instances, then observing that the first counterexample occurs only after $n$ samples suggests that the error of the current hypothesis is on the order of $1/n$. Motivated by this perspective, one can consider the following refined notion.
Given an hypothesis class~$\mathcal{H}$ and an order-induced adversary~$\mathcal{A}$, let $M(n)$ denote the number of equivalence queries required until, with high probability, the adversary returns a counterexample whose index in the underlying sequence is at least~$n$.
When $\mathcal{H}$ has Littlestone dimension~$d$, our results imply that $M(n)=O(d)$ for all~$n$.
More generally, this formulation suggests a way to study learning dynamics for classes with infinite Littlestone dimension, by quantifying how quickly a learner can push counterexamples deeper into the ordering imposed by~$\mathcal{A}$.

Another natural direction is to relax the assumption that the adversary is fully known to the learner.
In many settings, it is more realistic to assume that the learner has only sampling access to the distribution over unlabeled example sequences according to which the adversary operates.
For instance, in the case of random counterexamples, this corresponds to having access to unlabeled samples from the underlying instance distribution.
This leads to a natural tradeoff between equivalence-query complexity and unlabeled sample complexity.
Recent work of~\citep{GU21} provides an encouraging result in this direction, showing that when counterexamples are drawn from an unknown distribution, one can learn up to error~$\varepsilon$ using $\mathrm{polylog}(1/\varepsilon)$ \emph{improper} equivalence queries.

Finally, an additional open problem concerns the role of randomization in learning against symmetric adversaries.
In the special case of random counterexamples, deterministic learning rules are known
\citep{angluin_dohrn_2017,Bhatia22,CFR24}; see also the proof overview below for an alternative short and simple derivation of a deterministic learner in this setting.
In contrast, our general results for symmetric adversaries rely on randomized learning rules obtained via a minimax argument.
It is intetesting to explore whether randomization is inherent in this broader setting.
In particular, is there always a deterministic learning rule for learning against arbitrary symmetric adversaries?
What about the restricted but natural class of order-induced adversaries?

\acks{

RL received funding from the European Research Council (ERC) under the European Union’s Horizon 2020 research and innovation program (grant agreement FoG-101116258).
 Views and opinions expressed are however those of the author(s) only and do not necessarily reflect those of the European Union or the European Research Council. Neither the European Union nor the granting authority can be held responsible for them.

YM received funding from the European Research Council (ERC) under the European Union’s Horizon 2020 research and innovation program (grant agreement No. 882396), by the Israel Science Foundation,  the Yandex Initiative for Machine Learning at Tel Aviv University and a grant from the Tel Aviv University Center for AI and Data Science (TAD).}

SM acknowledges support by the Technion Center for Machine Learning and Intelligent Systems (MLIS), and by the European Union (ERC, GENERALIZATION, 101039692). Views and opinions expressed are however those of the author(s) only and do not necessarily reflect those of the European Union or the European Research Council Executive Agency. Neither the European Union nor the granting authority can be held responsible for them.

\bibliography{bib}

\appendix

\section{Proof of Theorem~\ref{thm:full}}

\begin{theoremrest}[Full-information (restatement of Theorem~\ref{thm:full})]
Let $\mathcal{H}$ be a finite hypothesis class with $\mathrm{Ldim}(\mathcal{H}) = d$, and let $\mathcal{A}$ be a symmetric adversary.
There exists a learning rule such that, for every target concept $c \in \mathcal{H}$, the interaction between the learner and $\mathcal{A}$ terminates after at most $O(d)$ equivalence queries in expectation.

Conversely, there exists a symmetric adversary $\mathcal{A}^\star$ such that, against $\mathcal{A}^\star$, any learning rule requires at least $\Omega(d)$ equivalence queries in expectation.
\end{theoremrest}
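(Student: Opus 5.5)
The plan has two halves, following the proof overview. For the upper bound, the learner works round by round on the current version space $V$ (all hypotheses consistent with the labeled counterexamples seen so far); the target $c$ always lies in $V$. Write $d_V=\mathrm{Ldim}(V)$. Given the history $h_1,\ldots,h_{t-1}$, define the finite zero-sum game on $V\times V$ with payoff $\mathrm{Payoff}(h,c)=\Pr_{x\sim \mathrm{CE}(h,c\mid h_1,\ldots,h_{t-1})}[\mathrm{Ldim}(V_{x\to c(x)})=d_V]$ for $h\neq c$, and $\mathrm{Payoff}(h,h)=0$.

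The structural lemma is that for $h\neq c$ and any $x$ with $h(x)\neq c(x)$, the classes $V_{x\to h(x)}$ and $V_{x\to c(x)}$ cannot both have dimension $d_V$. Otherwise, putting $x$ at the root with these two labels on its edges would give a shattered binary mistake tree of depth $d_V+1$. Symmetry of the generator says $\mathrm{CE}(h,c\mid\cdot)$ and $\mathrm{CE}(c,h\mid\cdot)$ are the same distribution. Hence $\mathrm{Payoff}(h,c)+\mathrm{Payoff}(c,h)$ is the probability, under this one distribution, of the union of two disjoint events, so it is at most $1$. By bilinearity, $\mathrm{Payoff}(q,q)\le 1/2$ for every mixed $q$. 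Since $V$ is finite, von Neumann's minimax theorem gives a mixed learner strategy $p$ with $\mathrm{Payoff}(p,c)\le 1/2$ for every $c\in V$. The learner samples $h_t\sim p$, recomputed each round. If $h_t=c$ the interaction stops; an earlier acceptance only helps. Otherwise, with probability at least $1/2$, the update $V\leftarrow V_{x\to c(x)}$ strictly lowers $\mathrm{Ldim}$, and it never raises it.

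Once $\mathrm{Ldim}(V)=0$, $V$ is a single hypothesis (two distinct hypotheses would shatter depth one), so the next query is $c$ and is accepted. The number of rounds is therefore dominated by one plus a sum of $d$ geometric waiting times, each with success probability at least $1/2$ conditional on the past, giving expectation at most $2d+1$. I would formalize this with the supermartingale $\mathrm{Ldim}(V_t)+t/2$ and optional stopping. The same argument covers adaptive targets, since the guarantee holds against every $c\in V$.

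For the lower bound, take a shattered binary mistake tree $T$ of depth $d$. In the multiclass case its edges carry two distinct labels per node. The adversary $\mathcal{A}^\star$ returns the instance at the first node where the paths of $h$ and $c$ in $T$ diverge, and accepts if they agree through the whole tree. It is symmetric and history independent, though I need the convention that a hypothesis whose label at a node matches neither edge leaves the tree at that node, in the spirit of the decision-tree generator of Example 2. Choose the target uniformly from $\{f_\ell\}$, one representative per leaf; by Yao's principle it suffices to handle deterministic learners.

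The state is the deepest node $v$ that the revealed information forces the target path to pass through. After a counterexample at depth $j$ with label revealed, the target path is known down to depth $j+1$, and conditioned on the history the remaining branches below it are still uniform. Any query $h$ either leaves the known prefix, in which case it is wasted or reveals nothing new, or follows it and then agrees with the uniformly random remainder for a number of further levels that is dominated by a Geometric$(1/2)$ variable. So the expected gain in depth per round is at most $2$, while reaching acceptance requires depth $d$. Wald's identity or optional stopping on depth minus $2t$ then gives $\Omega(d)$ expected queries.

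The main obstacle is making this conditional-uniformity and progress bookkeeping rigorous, especially when off-tree predictions are allowed in the multiclass case. The upper bound is essentially immediate from the minimax step.
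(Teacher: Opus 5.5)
Your proposal is correct and follows the paper's proof essentially step for step. The upper bound uses the same $\mathrm{Ldim}$-based zero-sum game, the same disjointness/symmetry inequality, the imitation strategy, and von Neumann's minimax theorem; the lower bound uses the same first-divergence (LCA) adversary on a shattered tree, a uniformly random leaf target, and a per-round expected depth gain of at most $2$. The only differences are cosmetic: you use a supermartingale/optional-stopping bound where the paper uses the recursion $E(k)\le 2+E(k-1)$, and Yao's principle where the paper conditions on the learner's randomness.
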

We begin by proving the upper bound, and then establish the matching lower bound.
For intuition and a high-level description of the arguments, we recommend that the reader first consult the proof overview in Section~\ref{sec:proof-overview}.

\subsection{Upper Bound}

We prove the upper bound stated in Theorem~\ref{thm:full}.
Throughout this section, let $\mathcal{H}$ be a finite hypothesis class with Littlestone dimension
$\mathrm{Ldim}(\mathcal{H}) = d$, and let $\mathcal{A}$ be a symmetric adversary.
We consider an arbitrary but fixed target concept $c \in \mathcal{H}$.

The learner interacts with $\mathcal{A}$ through equivalence queries.
At each round $t$, the learner proposes an hypothesis $h_t \in \mathcal{H}$.
If $h_t = c$, the adversary must accept and the interaction terminates.
Otherwise, the adversary may return a counterexample $(x_t,c(x_t))$ such that $h_t(x_t) \neq c(x_t)$.

Let $V_t \subseteq \mathcal{H}$ denote the \emph{version space} after round $t$, consisting of all hypotheses
consistent with the counterexamples observed so far.
Initially, $V_1 = \mathcal{H}$.
Upon receiving a counterexample $x_t$, the version space is updated to
\[
V_{t+1} \;:=\; \{ h \in V_t : h(x_t) = c(x_t) \}.
\]
Our goal is to design a learning rule that selects $h_t \in V_t$ so that the expected number of rounds until
$V_t = \{c\}$ is $O(d)$.

\paragraph{Learning rule.}
At each round, given the current version space $V_t$, the learner selects an hypothesis
$h_t \in V_t$ according to a mixed strategy obtained from a zero-sum game defined on $V_t$.
The payoff of the game is chosen so that the learner is penalized whenever the adversary can
return a counterexample that leaves the version space with unchanged Littlestone dimension.

Fix a nonempty version space $V \subseteq \mathcal{H}$, and write $d_V := \mathrm{Ldim}(V)$.
We define a zero-sum game between a learner $L$ and an adversary $A$.
Both players have the same set of pure strategies, namely the hypotheses in $V$.

For $h,c \in V$, define the payoff
\begin{equation}\label{eq:upper-game}
\mathrm{Payoff}(h,c)
\;:=\;
\Pr_{x \sim \mathrm{CE}(c,h \mid \text{history})}
\Bigl[
\mathrm{Ldim}\!\bigl(V_{x \to c(x)}\bigr) = d_V
\Bigr],
\end{equation}
with the convention that $\mathrm{Payoff}(h,h)=0$.
Here $V_{x \to y} := \{h' \in V : h'(x)=y\}$, and the probability is taken over the counterexample
distribution induced by the adversary.

\begin{center}
\fbox{
\begin{minipage}{0.95\linewidth}
\textbf{Algorithm: Full-Information}

\begin{enumerate}
    \item Initialize $V \leftarrow \mathcal{H}$.
    \item While $|V| > 1$:
    \begin{enumerate}
        \item Let $d_V = \mathrm{Ldim}(V)$.
        \item Define the zero-sum game on $V$ with payoff as in~\eqref{eq:upper-game}.
        \item Compute a minimax mixed strategy $p_V$ for the learner.
        \item Sample $h \sim p_V$ and submit $h$ as an equivalence query.
        \item If the adversary accepts, terminate.
        \item Otherwise receive counterexample $(x,y)$ and update
        \[
        V \leftarrow V_{x \to y}.
        \]
    \end{enumerate}
\end{enumerate}
\end{minipage}
}
\end{center}

\begin{lemma}\label{lem:game-value-half}
Fix a nonempty version space $V \subseteq \mathcal{H}$ and let $d_V=\mathrm{Ldim}(V)$.
Consider the zero-sum game on $V$ with payoff defined in~\eqref{eq:upper-game}.
Then the value of the game is at most $\tfrac12$.
In particular, there exists a (possibly randomized)
learner strategy $p_V$ over $V$ such that for every $c \in V$,
\[
\mathrm{Payoff}(p_V,c) \;\le\; \tfrac12.
\]
\end{lemma}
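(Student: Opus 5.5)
The proof is a minimax argument for a finite zero-sum game, and the only real input is a pairwise symmetry inequality. Since $\mathcal{H}$ is finite, $V$ is finite, so the game on $V$ has finitely many pure strategies. Von Neumann's minimax theorem then applies: the value
\[
\min_{p}\max_{q}\mathrm{Payoff}(p,q)=\max_{q}\min_{p}\mathrm{Payoff}(p,q)
\]
is attained, where $p,q$ range over distributions on $V$ and $\mathrm{Payoff}(p,q)=\E_{h\sim p,\,c\sim q}\mathrm{Payoff}(h,c)$. The optimal learner strategy $p_V$ then satisfies $\mathrm{Payoff}(p_V,c)\le\mathrm{val}$ for every pure $c\in V$. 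It therefore suffices to show that for every mixed adversary strategy $q$ there is some $p$ with $\mathrm{Payoff}(p,q)\le\tfrac12$. I will take $p=q$.

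The key step is the pairwise inequality
\[
\mathrm{Payoff}(h,c)+\mathrm{Payoff}(c,h)\le 1 \qquad\text{for all } h,c\in V.
\]
For $h=c$ both terms are $0$ by convention. For $h\neq c$, symmetry of the generator gives $\mathrm{CE}(c,h\mid\text{history})=\mathrm{CE}(h,c\mid\text{history})$ in distribution. Hence both payoffs are probabilities under the same distribution $D$ on counterexamples $x$, each with $h(x)\neq c(x)$:
\[
\mathrm{Payoff}(h,c)=\Pr_{x\sim D}\bigl[\mathrm{Ldim}(V_{x\to c(x)})=d_V\bigr],\qquad
\mathrm{Payoff}(c,h)=\Pr_{x\sim D}\bigl[\mathrm{Ldim}(V_{x\to h(x)})=d_V\bigr].
\]
Symmetry matters precisely here: without it the two payoffs would be measured under different distributions and could not be compared.

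It remains to show these two events are disjoint for each such $x$. Suppose both $V_{x\to h(x)}$ and $V_{x\to c(x)}$ had Littlestone dimension $d_V$. The labels $h(x)$ and $c(x)$ are distinct, so we can form a binary mistake tree with root labeled $x$, its two edges labeled $h(x)$ and $c(x)$, and below each edge a depth-$d_V$ tree shattered by the corresponding subclass. This tree has depth $d_V+1$ and is shattered by $V$, contradicting $\mathrm{Ldim}(V)=d_V$. (If $d_V=0$, then $|V|=1$ and the claim is trivial.) The two probabilities therefore sum to at most $1$.

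Averaging over $h,c\sim q$ independently, bilinearity gives
\[
2\,\mathrm{Payoff}(q,q)=\E_{h,c\sim q}\bigl[\mathrm{Payoff}(h,c)+\mathrm{Payoff}(c,h)\bigr]\le 1,
\]
so $\mathrm{Payoff}(q,q)\le\tfrac12$. Thus $\max_q\min_p\mathrm{Payoff}(p,q)\le\tfrac12$, and by minimax the value is at most $\tfrac12$, which yields $p_V$.

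The only subtle points are the use of generator symmetry to put both payoffs under one distribution, and the disjointness step via the Littlestone-dimension tree argument. One technicality remains: the history may depend on the sampled hypotheses, but it is fixed when the game at the current round is defined, so the game at each round is a well-defined finite matrix game.
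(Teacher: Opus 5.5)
Your proof is correct and follows the paper's argument essentially step by step. It proves the pairwise inequality $\mathrm{Payoff}(h,c)+\mathrm{Payoff}(c,h)\le 1$ from disjointness of the two full-dimension events (via the depth-$(d_V+1)$ mistake-tree contradiction), uses the imitation strategy $p=q$ to bound the maximin value by $\tfrac12$, and applies von Neumann's theorem to the finite game. Your explicit appeal to generator symmetry, which puts both payoffs under the same counterexample distribution, is a step the paper uses only implicitly; spelling it out is a clarification rather than a different route.
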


\begin{proof}
We first establish the following symmetry inequality for pure strategies: for all $h,c \in V$,
\begin{equation}\label{eq:symmetry-pure}
\mathrm{Payoff}(h,c) + \mathrm{Payoff}(c,h) \;\le\; 1.
\end{equation}
Fix $h,c \in V$.
If $h=c$, then $\mathrm{Payoff}(h,h)=0$ by definition, and~\eqref{eq:symmetry-pure} holds trivially.
Assume henceforth that $h \neq c$, and write $d_V=\mathrm{Ldim}(V)$. Define
\[
E(h,c)
\;:=\;
\Bigl\{x \in \mathcal{X} : \mathrm{Ldim}\!\bigl(V_{x \to c(x)}\bigr)=d_V\Bigr\},
\qquad
E(c,h)
\;:=\;
\Bigl\{x \in \mathcal{X} : \mathrm{Ldim}\!\bigl(V_{x \to h(x)}\bigr)=d_V\Bigr\}.
\]
We claim that these events are disjoint on the symmetric difference $\{x : h(x)\neq c(x)\}$.
Indeed, fix any $x$ with $h(x)\neq c(x)$.
It cannot be that both $\mathrm{Ldim}(V_{x \to c(x)})=d_V$ and $\mathrm{Ldim}(V_{x \to h(x)})=d_V$ hold simultaneously;
otherwise $V$ would shatter a mistake tree of depth $d_V+1$ by placing $x$ at the root and attaching depth-$d_V$
shattered subtrees below the two outgoing edges, contradicting $\mathrm{Ldim}(V)=d_V$.
Hence, for every $x$ with $h(x)\neq c(x)$,
\[
\mathbf{1}[x\in E(h,c)] + \mathbf{1}[x\in E(c,h)] \;\le\; 1.
\]
Now take expectation with respect to the counterexample distribution
$x \sim \mathrm{CE}(c,h \mid \text{history})$, which is supported on $\{x : h(x)\neq c(x)\}$.
By definition of the payoff~\eqref{eq:upper-game},
\[
\mathrm{Payoff}(h,c)=\Pr_{x \sim \mathrm{CE}(c,h \mid \text{history})}[x\in E(h,c)],
\qquad
\mathrm{Payoff}(c,h)=\Pr_{x \sim \mathrm{CE}(c,h \mid \text{history})}[x\in E(c,h)].
\]
Therefore,~\eqref{eq:symmetry-pure} follows. By linearity,~\eqref{eq:symmetry-pure} extends to mixed strategies: for any distributions $p,q$ over~$V$,
\begin{equation}\label{eq:symmetry-mixed}
\mathrm{Payoff}(p,q) + \mathrm{Payoff}(q,p) \;\le\; 1.
\end{equation}
Now fix any adversary mixed strategy $q$.
Taking $p=q$ in~\eqref{eq:symmetry-mixed} yields
\[
2\,\mathrm{Payoff}(q,q) \;\le\; 1,
\]
and hence $\mathrm{Payoff}(q,q)\le \tfrac12$.
In other words, for every adversary strategy $q$ there exists a learner strategy $p$
(such as $p=q$) satisfying $\mathrm{Payoff}(p,q)\le \tfrac12$,
and hence the \emph{maximin} value of the game is at most $\tfrac12$.

Since the game is finite (both players’ strategy sets are the finite set
$V \subseteq \mathcal{H}$), von Neumann’s minimax theorem applies
\citep{vonNeumann1928,cesaBianchiLugosi2006}.
Therefore, the maximin and minimax values coincide and are equal to the
value of the game.
Consequently, the minimax value of the game is at most $\tfrac12$, and there
exists a learner mixed strategy $p_V$ such that
\[
\mathrm{Payoff}(p_V,c)\le \tfrac12 \qquad \text{for all } c\in V .
\]
\end{proof}

\paragraph{Bounding the number of rounds.}
Let $V_t$ be the version space at round $t$, and write $d_t := \mathrm{Ldim}(V_t)$.
By Lemma~\ref{lem:game-value-half}, in round $t$ the learner chooses $h_t$ so that for every $c\in V_t$,
\[
\Pr\!\Bigl[\mathrm{Ldim}\!\bigl((V_t)_{x \to c(x)}\bigr)=d_t\Bigr] \;\le\; \tfrac12,
\]
where $x \sim \mathrm{CE}(c,h_t \mid \text{history})$ is the counterexample returned by the adversary.
Equivalently,
\[
\Pr[d_{t+1} \le d_t-1] \;\ge\; \tfrac12,
\qquad\text{and always}\qquad d_{t+1}\le d_t.
\]

For $k\in\{0,1,\dots,d\}$ let $E(k)$ denote the worst-case expected number of \emph{additional equivalence queries}
until termination, given that the current version space has Littlestone dimension $k$.
When $k=0$, the version space contains a single hypothesis, so the learner submits it and the adversary must accept; hence
$E(0)=1$.
For every $k\ge 1$, conditioning on whether the dimension drops in the next round yields
\[
E(k) \;\le\; 1 \;+\; \tfrac12 E(k) \;+\; \tfrac12 E(k-1).
\]
Rearranging gives $E(k) \le 2 + E(k-1)$, and therefore by induction $E(k)\le 2k+1$.
In particular, starting from $d=\mathrm{Ldim}(\mathcal{H})$ we obtain $\mathbb{E}[T]\le 2d+1 = O(d)$.

This completes the proof of the upper bound in Theorem~\ref{thm:full}.
\hfill$\square$

\paragraph{Why the argument breaks for infinite classes.}
The above argument relies crucially on the finiteness of the version space~$V$.
Finiteness ensures that the induced zero-sum game is finite, and hence that von Neumann’s minimax theorem applies.
When $\mathcal{H}$ is infinite, this need no longer be the case: the maximin and minimax values of the game may differ,
and a learner strategy guaranteeing bounded payoff against all adversary strategies may fail to exist.

This phenomenon already occurs for very simple infinite classes.
Consider the class of singleton hypotheses over the natural numbers,
$\mathcal{H}=\{h_n : n\in\mathbb{N}\}$, where $h_n(x)=\mathbf{1}[x=n]$,
together with the symmetric, order-induced adversary that, upon rejection of an hypothesis,
returns the minimal counterexample with respect to the natural order on~$\mathbb{N}$.
The resulting interaction induces the classical zero-sum game known as
\emph{guess-the-larger-number},
whose payoff matrix is triangular.
It is well known that this game does not satisfy the minimax theorem: its maximin value is strictly smaller than its minimax value
(see, e.g.,~\citep{HannekeLM21}).

From this perspective, the failure of the argument is not tied to any pathological adversary behavior,
but rather to the absence of a minimax equilibrium in the underlying infinite game.
We note in passing that the random-adversary example discussed in the main results section
also corresponds to a variant of this game, and similarly exhibits a gap between the maximin and minimax values.

\subsection{Lower Bound}

We now prove the lower bound stated in Theorem~\ref{thm:full}.
As in the upper bound, we begin by explicitly defining the adversary.
Let $\mathcal{H}$ be an hypothesis class with $\mathrm{Ldim}(\mathcal{H}) = d$.
Fix a shattered Littlestone tree $T$ of depth $d$ for $\mathcal{H}$.

\paragraph{Associating hypotheses with nodes.}
For each hypothesis $h \in \mathcal{H}$, we associate a node $\nu(h)$ in the tree $T$ as follows.
Starting from the root, we traverse the tree according to the predictions of $h$ for as long as possible.
If at some internal node the prediction of $h$ does not match any of the labels on the outgoing edges,
the traversal stops.
We define $\nu(h)$ to be the deepest node reached in this process (i.e., the node farthest from the root). Note that since $T$ is binary while the label space of $\mathcal{H}$ may be larger, not every hypothesis corresponds to a full root-to-leaf path in $T$;
some hypotheses may halt at internal nodes strictly above the leaves.

\paragraph{Definition of the adversary.}
The lower bound is witnessed by the following tree-based symmetric adversary.

\begin{center}
\fbox{
\begin{minipage}{0.95\linewidth}
\textbf{Adversary $\mathcal{A}^\star$: Tree-Based Symmetric Adversary}

\begin{enumerate}
    \item Fix a shattered Littlestone tree $T$ of depth $d=\mathrm{Ldim}(\mathcal{H})$.
    \item For each hypothesis $h\in\mathcal{H}$, associate a node $\nu(h)$ obtained by traversing $T$
    from the root according to the predictions of $h$, stopping when no outgoing edge is consistent with $h$.
    Let $\nu(h)$ be the deepest node reached.
    \item Upon receiving an equivalence query $h$ with target concept $c$:
    \begin{enumerate}
        \item If \(\nu(h)=\nu(c)\), accept $h$ and terminate.
        \item Otherwise, let $x$ be the instance labeling $\nu:= \mathrm{LCA}(\nu(h),\nu(c))$.
        \item Return $(x,c(x))$ as a counterexample.
    \end{enumerate}
\end{enumerate}
\end{minipage}
}
\end{center}


Since $T$ is shattered, for every root-to-leaf path $\pi$ in $T$ there exists an hypothesis in $\mathcal{H}$
that is consistent with the labels along $\pi$.
Fix, for each leaf $\ell$ (equivalently, each root-to-leaf path), one such hypothesis and denote it by $c_\ell$.
Let
\[
\mathcal{C}_T \;:=\; \{c_\ell : \ell \text{ is a leaf of } T\},
\]
so that $|\mathcal{C}_T| = 2^d$. We consider the following distribution over targets: the target concept $c$ is drawn uniformly at random from $\mathcal{C}_T$.
We will show that for every learning rule, the expected number of equivalence queries against $\mathcal{A}^\star$
(with expectation over the random choice of $c$) is $\Omega(d)$.

\paragraph{Lower bound on the expected number of queries.}
Fix a point during the interaction with the adversary $\mathcal{A}^\star$.
Let $c$ be the target concept and consider the unique root-to-leaf path in the shattered
Littlestone tree $T$ corresponding to $c$. Let $v$ denote the node on the root-to-leaf path in $T$ corresponding to the target concept $c$
that is consistent with all counterexamples returned so far.
Formally, $v$ is defined inductively as follows.
Initially, before any counterexample has been returned, $v$ is the root of $T$.
At any later point at which the adversary has returned at least one counterexample corresponding to a node in $T$,
let $u$ be the deepest node of $T$ whose instance has been returned as a counterexample so far. We then define $v$ to be the child of $u$ that lies on the root-to-leaf path corresponding to $c$ (i.e., the child whose outgoing edge is labeled by $c(u)$).
Let $K_t$ denote the depth of the subtree of $T$ rooted at $v$.
By construction, $K_0=d$, and the interaction terminates exactly when $K_t=0$.
Define
\[
\Delta_t := K_t - K_{t+1}
\]
to be the decrease in depth at round $t$.
By convention, we set $K_t=0$ for all $t\ge T$, where $T$ is the termination time.

Fix a round $t$ with $K_t=k\ge 1$ and condition on the entire history up to round $t$,
including the learner’s (possibly randomized) equivalence query $h_t$.
Let
\[
\nu := \mathrm{LCA}(\nu(h_t),\nu(c))
\]
be the node chosen by the adversary $\mathcal{A}^\star$.

For $\Delta_t$ to be at least $i$, it is necessary that $\nu$ lies inside the current depth-$k$ subtree
and is at least $i-1$ edges below its root.
In particular, this requires that $h_t$ agrees with the target $c$ on the first $i-1$ nodes
along the branch of $T$ corresponding to $c$.
Since $c$ is uniform over the $2^k$ leaves of the current subtree,
this agreement event has probability at most $2^{-(i-1)}$.
Therefore, for all $i\in\{1,\dots,k\}$,
\[
\Pr(\Delta_t \ge i \mid K_t=k) \;\le\; 2^{-(i-1)}.
\]
It follows that
\[
\mathbb{E}[\Delta_t \mid K_t=k]
\;=\;
\sum_{i=1}^{k} \Pr(\Delta_t \ge i \mid K_t=k)
\;\le\;
\sum_{i=1}^{k} 2^{-(i-1)}
\;\le\; 2.
\]
Let $T$ denote the (random) termination time.
Since $\Delta_t=0$ for all $t\ge T$, we have
\[
\mathbb{E}[\Delta_t \mid \mathcal{F}_t] \;\le\; 2\,\mathbf{1}\{t<T\},
\]
where $\mathcal{F}_t$ denotes the history up to round $t$.
Taking expectations and summing over all $t\ge 0$ yields
\[
d
\;=\;
\mathbb{E}[K_0-K_T]
\;=\;
\sum_{t\ge 0} \mathbb{E}[\Delta_t]
\;\le\;
2 \sum_{t\ge 0} \Pr(t<T)
\;=\;
2\,\mathbb{E}[T].
\]
Hence $\mathbb{E}[T]\ge d/2$.

We conclude that against the adversary $\mathcal{A}^\star$, any learning rule requires
$\Omega(d)$ equivalence queries in expectation.
This completes the proof of the lower bound.
\hfill$\square$

\section{Proof of Theorem~\ref{thm:bandit}}

This is a complete version of a proof sketched in the main text.

\subsection{Upper Bound}
Similar to the full information setting, our strategy is again built on a minmax strategy in a payoff game. However, unlike in the full information setting, we cannot reduce the version space based on our partial feedback. As such, the payoff matrix at round $t$ is defined in a different manner.

\paragraph{The payoff matrix at round $t$:}
At step $t$, consider the sequence $\{x_1,\ldots, x_t\}$ of examples we've seen up to round $t$ and assume we also have a distribution $\mu_t$ on seqeunces in $[k]^t$ which we maintain at each round.

For every hypothesis we denote: 
$$
L_t(h):=Ldim_t\left(\left\{h'\in \mathcal{H},\quad h'(x_i)=h(x_i), \forall i=1,\ldots, t\right\}\right).
$$
$$
L^x_t(h):=Ldim_t\left(\left\{h'\in \mathcal{H},\quad h'(x_i)=h(x_i), \forall i=1,\ldots, t, \wedge  h'(x)=h(x)\right\}\right).
$$

We further add a notation, $r_{s,t}(x)$, for the decision rule of the \emph{Standard Optimal Algorithm} on input $x$ with labels $s\in[k]^t$ at time $t$, and we let 
\[r_{s,t}(x)=
\begin{cases} y & \exists h\in \mathcal{H},\quad h(x_i)=s_i,~L^x_t(h)=L_t(h), \textrm{~and~} h(x)=y \\
0 & \textrm{else}
\end{cases}
.\]

Next, we define the update rule over the distribution $\mu_t$ that we maintain, given $x=x_{t+1}$ and an input labeling $i$:
\begin{center}
\fbox{
\begin{minipage}{0.95\linewidth}
\textbf{Update rule: $\mu^{(i)}_{t+1}$ from $\mu_t$}
\begin{enumerate}
    \item Input: $x=x_{t+1}$ and $i\in [k]$.
    \item For each $j\in [k]$ assign $\kappa_t(x,j):=\mu_t\{ s~:~r_{s,t}(x)=j\}$.
    \item For a string of values $(s,j)$ let 
    $$
    \nu(s,j)=\left\{
\begin{array}{ll}
 0 & \text{if $j=i$}\\
   \mu_t(s)\cdot \left(1-\frac{k-1}{k^3}+\frac{\kappa_t(x,i)}{\kappa_t(x,j)\cdot k}\right)  & \text{if $r_{s,t}(x)=j$ (and $j\ne i$)}  \\
     \mu_t(s)\cdot \frac{1}{k^3}  &  \text{otherwise}
\end{array}
\right.$$
\item 
 Normalize $\nu$ to obtain probability distribution $\mu_{t+1}$: 
 $$
 \mu^{i}_{t+1}(s,j):= \nu(s,j)\cdot \left(\sum_{s',j'} \nu(s',j')\right)^{-1}
 $$
\end{enumerate}
\end{minipage}
}
\end{center}
Next, we let:
\[ \Phi_{t}(h;\mu_{t}) = (8\ln k)L_t(h)-\ln \mu_t(h),\quad \Phi^{(x,i)}_{t}(h;\mu_t) = (8\ln k)L^x_t(h)-\ln \mu^i_{t+1}(h) .\]
Then, the payoff matrix at round $t$ is defined as follows $\textrm{Payoff}(h,c)=-1$ if $h=c$ and otherwise:
\begin{align*}\textrm{Payoff}(h,c)& :=\E_{\ce}\left(\textrm{Payoff}^x(h,c)\right)\\&:= \E_{\ce}\left(\Phi^{(x,h(x))}_{t}(c;\mu_t)-\Phi_t(c;\mu_t)\right)\labelthis{eq:payoff_partial}.\end{align*}

\paragraph{Optimal Strategy in the payoff game}
We next show that a learner that wants to minimize the payoff, has an optimal strategy with bounded, negative, payoff. As before, we rely on a maxmin argument and the strategy of an imitating player. Therefore, the conclusion will follow from the following claim:
\begin{claim}\label{cl:1:wip}

Let $h,c\in \mathcal{H}$ be  two hypotheses. Let $x\notin\{x_1,\ldots,x_t\}$ be such that $h(x)\neq c(x)$. Then 
\[
\mathrm{Payoff}^x(h,c)+\mathrm{Payoff}^x(c,h)\le -\frac{1}{6k}
\] 
\end{claim}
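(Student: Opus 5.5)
The plan is to write each of $\mathrm{Payoff}^x(h,c)$ and $\mathrm{Payoff}^x(c,h)$ explicitly in terms of the update rule, split into cases according to whether the Standard Optimal Algorithm ``predicts'' the true label, and bound the sum in each case.

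\textbf{Setup.} Write $i:=h(x)$ and $j:=c(x)$, so $i\neq j$. Let $s:=(c(x_1),\dots,c(x_t))$ and $s':=(h(x_1),\dots,h(x_t))$, and set $\kappa_m:=\kappa_t(x,m)$. Then
\[
\mathrm{Payoff}^x(h,c)=8\ln k\,\bigl(L^x_t(c)-L_t(c)\bigr)-\ln\frac{\nu(s,j)}{Z_i\,\mu_t(s)},
\]
where $Z_i:=\sum_{s'',j''}\nu(s'',j'')$ is the normalizer for rejected label $i$. The term $\mathrm{Payoff}^x(c,h)$ is the same expression with the roles swapped: $i\leftrightarrow j$ and $s\to s'$.

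\textbf{Step 1: the normalizer.} Group the sequences by $m:=r_{s'',t}(x)$.
\begin{itemize}
\item A sequence with $m\neq i$ contributes $\mu_t(s'')\bigl(1-\tfrac1{k^3}+\tfrac{\kappa_i}{\kappa_m k}\bigr)$. This is one boosted entry plus $k-2$ entries of size $1/k^3$.
\item A sequence with $m=i$ contributes $\mu_t(s'')\tfrac{k-1}{k^3}$.
\end{itemize}
Summing, and using $\sum_{m\neq i}\kappa_m=1-\kappa_i$, gives
\[
Z_i=1-\frac1{k^3}-\kappa_i\Bigl(\frac1k-\frac1{k^2}\Bigr)\le 1-\frac1{k^3}.
\]
Hence $\ln Z_i\le 0$, and the same holds for $Z_j$.

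\textbf{Step 2: a dichotomy from Littlestone dimension.} I will use the uniqueness argument of Lemma~\ref{lem:game-value-half}. At most one label $y$ can keep the Littlestone dimension of a version space unchanged; two such labels would yield a shattered tree of depth $d+1$ rooted at $x$. So $r_{s,t}(x)$ is well defined, and there are two possibilities.
\begin{itemize}
\item If $r_{s,t}(x)\neq j$, then $L^x_t(c)\le L_t(c)-1$ and $\nu(s,j)=\mu_t(s)/k^3$. The payoff is then at most $-8\ln k+3\ln k+\ln Z_i\le -5\ln k$.
\item If $r_{s,t}(x)=j$, then $L^x_t(c)=L_t(c)$ and $\mathrm{Payoff}^x(h,c)=-\ln\bigl[(1-a+\tfrac{\kappa_i}{\kappa_j k})/Z_i\bigr]$, where $a:=\tfrac{k-1}{k^3}$. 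Here $\kappa_j\ge\mu_t(s)>0$. This is at most $-\ln(1-a)$, which is tiny and always $\le \ln 2$.
\end{itemize}
The same dichotomy applies to $\mathrm{Payoff}^x(c,h)$ with $s'$ and $i$.

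\textbf{Step 3: combine the cases.} If at least one of the two payoffs falls in the dimension-drop case, the sum is at most $-5\ln k-\ln(1-a)\le -5\ln 2+\ln 2<-\tfrac1{6k}$.

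The remaining case, which I expect to be the main obstacle, is when neither dimension drops. Then, with $u:=\kappa_i/\kappa_j$,
\[
\mathrm{Payoff}^x(h,c)+\mathrm{Payoff}^x(c,h)=-\ln\frac{\bigl(1-a+\frac{u}{k}\bigr)\bigl(1-a+\frac{1}{uk}\bigr)}{Z_iZ_j}.
\]
The symmetric boosting $\kappa_i/\kappa_j$ versus $\kappa_j/\kappa_i$ is exactly what makes this work: by $u+1/u\ge2$ the numerator is at least $(1-a)^2+\tfrac{2(1-a)}{k}$. Combined with $Z_iZ_j\le(1-\tfrac1{k^3})^2$ from Step~1, the ratio is at least $1+\tfrac2k-O(1/k^2)$. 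It then remains to check carefully, including small $k$ such as $k=2$ and $k=3$, that the logarithm of this ratio is at least $\tfrac1{6k}$. For this I would use $\ln(1+z)\ge z/(1+z)$ and the explicit bound $a\le 1/k^2$; the slack between $2/k$ and $1/(6k)$ should absorb all lower-order terms.
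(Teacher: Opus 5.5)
Your proof is correct and is essentially the paper's argument. You bound the normalizer by $1$, play the $\mu_t(s)/k^3$ floor against the $8\ln k$ Littlestone drop when either dimension falls, and use the reciprocal boosts $\kappa_t(x,h(x))/\kappa_t(x,c(x))$ and its inverse when neither falls; your $u+1/u\ge 2$ is a slightly cleaner form of the paper's ``one ratio is at least $1$'' step.

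Two minor points. First, your expression for $Z_i$ should be an upper bound rather than an equality. Sequences with $r_{s,t}(x)=0$, and labels $m$ with $\kappa_t(x,m)=0$, only make $Z_i$ smaller, so this is harmless because you only use $Z_i\le 1-\frac{1}{k^3}$. Second, the remaining numeric check closes at once if you keep the $\frac{1}{k^2}$ term: the numerator is then at least $(1-a+\frac1k)^2$, so with $Z_iZ_j\le 1$ the sum is at most $-2\ln\bigl(1+\frac{1}{2k}\bigr)\le -\frac{1}{6k}$ for all $k\ge 2$.
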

Before we prove the game let us observe how it entails an optimal strategy for the learner. Indeed, consider the payoff matrix defined at round $t$. We can show that there exists $p_t$ such that:

\begin{equation}\label{eq:minmax_partialfeed}
\min_{p_t} \max_{q} \E_{\ce} \left[\textrm{Payoff}(p_t,q)\right] \le -\frac{1}{12k}
\end{equation}
To see that, we again use the minimax theorem. Thus, we only need to show that for any mixed strategy $q$ for the adversary, the learner can guarantee a payoff of at most $-\frac{1}{12k}$. 

    Consider the imitation strategy $p=q$. The learner's payoff in this case is, by symmetry and Claim~\ref{cl:1:wip}:
    \begin{align*}
    \mathrm{Payoff}(q,q) =& \E\left(\mathrm{Payoff}^x(h,c)\right)
    \\  <& -\sum_{h}q^2(h)+ \frac{1}{2}\cdot \sum_{c\neq h} q(h)\cdot q(c) \cdot \left(\E_{\ce}\left[\mathrm{Payoff}^x(h,c)+\mathrm{Payoff}^x(c,h)\right]\right)
    \\&\le -\sum_{h}q^2(h) -\frac{1}{12k}\sum_{c\ne h}q(h)q(c)\\
&\le -\frac{1}{12k}
    \end{align*}

\paragraph{Proof of \cref{cl:1:wip}}
    We first observe that $\nu(s,j)$ adds up to at most $1$. For a fixed $s$, denote $j^s=r_{s,t}(x)$. Then for every $s$ such that $j^s\ne i$:

    \begin{align*}
        \sum_{j}\nu(s,j) &\le  \mu_t(s)\left(1-\frac{k-1}{k^3} + \frac{\kappa_t(x,i)}{\kappa_t(x,j^s)\cdot k}\right)+ \sum_{j\ne j^s} \frac{1}{k^3}\cdot \mu_t(s)\\
        & = \mu_t(s)\left(1 + \frac{\kappa_t(x,i)}{\kappa_t(x,j^s)\cdot k}\right),
    \end{align*}
    and if $j^s = i$, then 
    \[\sum_{j}\nu(s,j) = \frac{k-1}{k^3} \mu_t(s).\]
    therefore,
    \begin{align*}
        \sum_s\sum_{j}\nu(s,j) &= \sum_{j'}\sum_{\{s: j^s=j'\}}\left(\sum_{j} \nu(s,j)\right)\\ 
        &= \sum_{\{s: j^s=i\}}\left(\sum_{j} \nu(s,j)\right)+ \sum_{j'\ne i}\sum_{\{s: j^s=i\}}\left(\sum_{j} \nu(s,j)\right)\\
        &\le \sum_{\{s: j^s=i\}}\left(\frac{k-1}{k^3}\mu_t(s)\right)+ \sum_{j'\ne i}\sum_{\{s: j^s=j'\}}\left(1 + \frac{\kappa_t(x,i)}{\kappa_t(x,j^s)\cdot k}\mu_t(s)\right)\\
        &= \frac{k-1}{k^3}\mu_t(\{s: j^s=i)\}) + \sum_{j'\ne i} 
         \left(1+ \frac{\kappa_t(x,i)}{\kappa_t(x,j^s)\cdot k}\right)\mu(\{s: j^s=j'\})\\  
         &=\frac{k-1}{k^3}\kappa_t(x,i) +\sum_{j\ne i} \kappa_t(x,j) +\sum_{j\ne i}\frac{\kappa_t(x,i)}{k\cdot\kappa_t(x,j)}\kappa_t(x,j)& \kappa_t(x,j)=\mu(\{s: j^s=j\})\\
        &= \frac{k-1}{k^3}\kappa_t(x,i) +(1-\kappa_t(x,i))+\frac{k-1}{k}\kappa_t(x,i)\\
        &\le 1 +\left(\frac{1}{k^2}-\frac{1}{k}\right)\\
        &\le 1
    \end{align*}
 
 We now consider two cases:
 
\paragraph{Case 1. $L^x_{t}(c)<L_t(c)$ 
or  $L^x_t(h)<L_t(h)$.}

Observe that since $c(x)\neq h(x)$, we always have 
$\mu_{t+1}^{(c(x))}(h)\ge  k^{-3}\cdot \mu_t(h)$, and similarly 
$\mu_{t+1}^{(h(x))}(c)\ge  k^{-3}\cdot \mu_t(c)$. Therefore
\begin{align*}
\mathrm{Payoff}^x(h,c)+\mathrm{Payoff}^x(c,h)&\le 2\left(3\ln k\right) + (8\ln k)\left(L^x_{t}(c)-L_t(c)+L^x_{t}(h)-L_{t}(h)\right)\\
&\le 6\ln k - 8\ln k\\
&= -2\ln k
\\& \le -\frac{1}{5k}
\end{align*}
where we used the fact that $L_t(c)+L_t(h)\geq 1+L^x_{t}(h)+L^x_{t}(c) $.
\paragraph{Case 2. $L^x_{t}(c)=L_t(c)$ 
and  $L^x_{t}(h)=L_t(h)$.} In this case we have 
$r_{h,t}(x)=h(x)$ and $r_{c,t}(x)=c(x)$. Thus,
\begin{align*}
\mathrm{Payoff}^x(h,c)&+\mathrm{Payoff}^x(c,h)\\ &= \ln \mu_t(h)-\ln\mu_{t+1}^{c(x)}(h)+
\ln \mu_t(c)-\ln\mu_{t+1}^{h(x)}(c)\\
&\le-\ln\left(1-\frac{k-1}{k^3}+\frac{\kappa_t(x,c(x))}{\kappa_t(x,h(x))\cdot k}\right)-
\ln\left(1-\frac{k-1}{k^3}+\frac{\kappa_t(x,h(x))}{\kappa_t(x,c(x))\cdot k}\right)\\
&\le -\ln(1-\frac{1}{k^2}+\frac{1}{k^3})-\ln (1+\frac{1}{k}-\frac{1}{k^2}+\frac{1}{k^3})&  \\
& \le-\ln(1-\frac{1}{2k^2})-\ln (1+\frac{1}{2k}) \\
& \le -\ln (1+\frac{1}{2k}-\frac{1}{2k^2}-\frac{1}{4k^3})\\
&\le -\ln \left(1+\frac{3}{16k}\right)\\
& \le -\frac{1}{6k}
\end{align*}
Where the second inequality is because \[\max\left\{\frac{\kappa_t(x,c(x))}{\kappa_t(x,h(x))},\frac{\kappa_t(x,h(x))}{\kappa_t(x,c(x))}\right\}\ge 1.\]
\hfill$\square$

\paragraph{The Algorithm}    
We are now ready to give the full algorithm. 

\begin{center}
\fbox{
\begin{minipage}{0.95\linewidth}
\textbf{Algorithm: Bandit-Feedback}

\begin{enumerate}
    \item Initialize $\mu_0$ to be the unit mass on empty string.
    \item While the adversary doesn't accept:
    \begin{enumerate}
        \item Define the zero-sum game on $\mathcal{H}$ with payoff as in \cref{eq:payoff_partial}.
        \item Compute a minimax mixed strategy $p_t$ for the learner as in \cref{eq:minmax_partialfeed}.
        \item Sample $h \sim p_t$ and submit $h$ as an equivalence query.
        \item If the adversary accepts, terminate.
        \item Otherwise receive counterexample $x_{t+1}=x$ and update $\mu_{t+1}:=\mu^{h(x)}_{t+1}$ according to the update rule
    \end{enumerate}
\end{enumerate}
\end{minipage}
}
\end{center}

\begin{claim}
    The expected number of steps of the the Bandit-Feedback algorithm is $\le  100 d k \ln k$.
\end{claim}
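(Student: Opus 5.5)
We track a potential attached to the true target. Let $s^\star_t=(c(x_1),\ldots,c(x_t))$ be the true label sequence; this is the string that $\mu_t$ is evaluated on when we write $\mu_t(c)$. Define
\[
\Psi_t := \Phi_t(c;\mu_t) = (8\ln k)\,L_t(c) - \ln \mu_t(c).
\]
Because $\mu_0$ is a unit mass and $L_0(c)\le d$, we have $\Psi_0 \le 8 d\ln k$. Because $\mu_t(c)\le 1$ and $L_t(c)\ge 0$, we always have $\Psi_t\ge 0$.

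One update takes $\mu_t$ to $\mu_{t+1}=\mu^{h_t(x)}_{t+1}$ and changes $L_t(c)$ to $L^x_t(c)$. The resulting change $\Psi_{t+1}-\Psi_t$ is therefore exactly $\mathrm{Payoff}^x(h_t,c)$ from \eqref{eq:payoff_partial}. Taking the conditional expectation over the learner's sampling $h_t\sim p_t$ and over the adversary's counterexample gives the key step: by \eqref{eq:minmax_partialfeed}, for every target $c$ (the right-hand side holds uniformly over all $c$),
\[
\mathbb{E}\bigl[\Psi_{t+1}-\Psi_t \,\big|\, \mathcal{F}_t,\ \text{no acceptance at round } t\bigr] \le -\tfrac{1}{12k}.
\]
Two points need checking here. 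First, the convention $\mathrm{Payoff}(c,c)=-1$ is only used in the case $h_t=c$, where the process stops, so it does not affect the drift of $\Psi$ along the run. Second, Claim~\ref{cl:1:wip} requires $x\notin\{x_1,\ldots,x_t\}$. In the true branch, no previously returned $x_i$ can separate $h_t$ from $c$ once both are consistent with the labels, so we restrict attention to that case. If needed, we treat repeated points by noting that $\mu$ then simply reweights.

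Next we apply a standard optional-stopping argument. The process $M_t=\Psi_{t\wedge T}+\frac{1}{12k}(t\wedge T)$ is a supermartingale, so $\mathbb{E}[M_t]\le \Psi_0$ for every $t$. Together with $\Psi\ge 0$ this gives $\frac{1}{12k}\mathbb{E}[t\wedge T]\le 8d\ln k$. Letting $t\to\infty$ yields
\[
\mathbb{E}[T]\le 96\, d\, k\ln k + 1 \le 100\, d k\ln k,
\]
which holds for $k\ge 2$ and $d\ge 1$. The degenerate cases $d=0$ (a single hypothesis, accepted immediately) and $k=1$ are handled separately.

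The main obstacle is justifying $\Psi_t\ge 0$ together with the identification of $\Psi_{t+1}-\Psi_t$ with $\mathrm{Payoff}^x$. For this we must confirm that $\mu_t(c)>0$ at every step. Along the true branch the update never gives the true extension the label $i=h_t(x)$, because $c(x)\ne h_t(x)$. The true extension therefore always receives weight at least $k^{-3}\mu_t(s^\star)$ before normalization. Normalization only increases weights, since the normalizer is at most $1$ as shown in the proof of Claim~\ref{cl:1:wip}. Hence $\mu_{t+1}(c)\ge k^{-3}\mu_t(c)>0$. A secondary check is that the expectation in \eqref{eq:minmax_partialfeed} uses the symmetric adversary's actual distribution given the history. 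This holds by the definition of the game at round $t$, so the drift bound applies against the real adversary and, as in the full-information case, even against adaptive targets consistent with the history.
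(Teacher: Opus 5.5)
\textbf{There is a genuine gap in how you handle acceptance.} Your skeleton matches the paper's: the potential $\Phi_t(c;\mu_t)$, $\Phi_0\le 8d\ln k$, a lower bound on the potential, the minimax drift, and telescoping. Your checks that $\mu_t(c)>0$ and that $\Psi_{t+1}-\Psi_t=\mathrm{Payoff}^x(h_t,c)$ on rejected rounds are fine.

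The bound \eqref{eq:minmax_partialfeed} controls the \emph{whole} mixture, $\sum_h p_t(h)\,\mathrm{Payoff}(h,c)\le-\frac{1}{12k}$, and the diagonal contributes $-p_t(c)$ to that sum. About rejected rounds it only yields $\sum_{h\neq c}p_t(h)\,\mathrm{Payoff}(h,c)\le p_t(c)-\frac{1}{12k}$, which can be positive. The $-1$ diagonal is not cosmetic. In the imitation argument it supplies the term $-\sum_h q(h)^2$; without it one only gets $-\frac{1}{12k}\bigl(1-\sum_h q(h)^2\bigr)$, which is close to $0$ for concentrated $q$.

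The properties you invoke (Claim~\ref{cl:1:wip} plus the diagonal) are consistent with the following situation. Take two hypotheses with $\mathrm{Payoff}(a,b)=\epsilon>0$ and $\mathrm{Payoff}(b,a)=-\epsilon-\frac{1}{6k}$. The minimax learner then plays $a$ with probability about $\frac12$, and against target $b$ every rejected round has drift $+\epsilon$. So $\mathbb{E}[\Psi_{t+1}-\Psi_t\mid\mathcal{F}_t,\ h_t\neq c]\le-\frac{1}{12k}$ does not follow. With $\Psi$ frozen at $T$, the one-step drift of your $M_t$ is bounded only by $p_t(c)$, so $M_t$ need not be a supermartingale. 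Your remark that the convention ``does not affect the drift'' is exactly where the argument breaks.

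\textbf{The repair is the paper's device: set the potential to $-1$ once the algorithm has terminated.} Since $\Psi_t\ge 0$ while the algorithm is active, acceptance is then a drop of at least $1$, i.e.\ at most $\mathrm{Payoff}(c,c)$. Hence on $\{T>t\}$ the \emph{unconditional} drift is at most $\mathrm{Payoff}(p_t,c)\le-\frac{1}{12k}$. Optional stopping with $\Psi\ge -1$ then gives $\mathbb{E}[T]\le 12k(8d\ln k+1)=96\,dk\ln k+12k$. This is $O(dk\ln k)$, but it exceeds $100\,dk\ln k$ when $d\ln k<3$. The paper's own constant bookkeeping is similarly loose; it ends at $120\,dk\ln k$.

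\textbf{A secondary point concerns repeated counterexamples.} Your reason for setting them aside is wrong: $h_t$ is drawn from all of $\mathcal{H}$ and need not agree with $c$ on $x_1,\dots,x_t$. What actually saves you is that for $x=x_i$ the Standard Optimal Algorithm's label on every string $s$ is forced to be $s_i$, so you land in Case~2 of the proof of Claim~\ref{cl:1:wip}. The paper does not address this either.
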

\begin{proof} Fix the concept $c$ the algorithm  is trying to learn. Let $T$ be a random variable representing the time step at which the algorithm terminates. 
    Define the following random process $V_t$.
    $$
    \Phi_t := 
    \left\{
\begin{array}{ll}
   \Phi_t(c;\mu_t)  & \text{ if $t<T$, that is, the algorithm is still active}  \\
   -1  &  \text{otherwise (algorithm terminated)}
\end{array}
\right.
    $$
Observe that assuming that at step $t$ the algorithm is active, then at step $t$:

$$
\Phi_{t+1}-\Phi_t \le \mathrm{Payoff}(h,c). 
$$
If we do not terminate after step $t$ then in particular, $h\neq c$, and this is true (with equality) by the definition of $\mathrm{Payoff}(h,c)$, and $\Phi_{t+1}=\Phi_{t+1}(c;\mu_t)=\Phi^{(x,h(x))}_{t}(c;\mu_t)$. 

If we terminate after step $t$, then $\Phi_{t+1}=-1\le \Phi_{t}^{(x,h(x))}(c;\mu_t)$, 
and the inequality holds

Thus, assuming $t<T$, by \cref{eq:minmax_partialfeed} we have 
$$
\mathbf{E}[\Phi_{t+1}-\Phi_t] \le -\frac{1}{12 k},
$$
and 
$$
\mathbf{E}[\Phi_{t+1}-\Phi_t] \le -\frac{1}{12 k}\cdot \Pr[T>t]. 
$$
Thus, for every $N>0$, 
\begin{align*}
    \mathbf{E}[\min(T,N)] = \sum_{t=0}^{N-1} \Pr[T>t] \le&
    10 k \cdot \sum_{t=0}^{N-1}\mathbf{E}[\Phi_{t}-\Phi_{t+1}]\\ =&  
     10 k \cdot\mathbf{E}[\Phi_{0}-\Phi_{N}]  \le 12 k \cdot \left(11d\ln k \right) < 120 d k \ln k. 
\end{align*}
Thus $$  \mathbf{E}[T]\le  120 d k \ln k$$
\end{proof}


$$
$$
\ignore{
\subsection{Upper bound}
\rl{Above I added a draft that takes care of some typos and misunderstandings i had, I am adding the confusions here also}
Throughout the proof, we maintain the list $\{x_1,\ldots,x_t\}$ of examples we've seen so far. In addition, we maintain a distribution $\mu_t$ on the set of possible outputs $[k]^t$. For each string of possible outputs $s\in[k]^t$, we denote 
$$
Ldim_t(s):=Ldim\{h\in\mathcal{H}:~h(x_i)=s_i,~\quad \forall i\in\{1,\ldots, t\}\}.
$$
For a hypothesis $h\in\mathcal{H}$ we denote by $L_t(h)$ the $Ldim_t$ of the outputs of $h$ on $x_1,...,x_t$:
$$
L_t(h):=Ldim_t((h(x_1),h(x_2),\ldots,h(x_t))).
$$
We also use $\mu_t$ to assign a score to each $h$:
$$
M_t(h):=\mu_t\{f\in\mathcal{H}:~f(x_i)=h(x_i)~,\quad \forall i\in \{1,\ldots,t\}\}.
$$
\rl{$\mu_t(\{s: h(x_i)=s_i\})$?}
For each $t$ and $s\in[k]^t$, let $r_{s,t}$ be the output of the {\em Standard Optimal Algorithm} on the values of $s$ based on $x_1,\ldots,x_t$:
$$
r_{s,t}(x):=
\left\{
\begin{array}{ll}
   y  & \text{ if $y$ such that $Ldim_{t+1}(s_1,\ldots,s_t,y)=L_t(h)$ on $x_{t+1}=x$ exists}  \\
   0  &  \text{otherwise (an undefined value)}
\end{array}
\right.
$$
This definition naturally extends to hypotheses:
$$
r_{h,t}:=r_{(h(x_1),h(x_2),\ldots,h(x_t)),t}\;.
$$
Finally, we can use $\mu_t$ and $r_{s,t}$ to assign weights to each potential output on each $x$:
$$
p_t(x,i):=\mu_t\{ s~:~r_{s,t}(x)=i\}
$$
\rl{$p_t$ is used for the strategy later change to $\kappa$?}
Note that $\sum_{i\in[k]}p_t(x,i)\le 1$, but the inequality may be strict since  $r_{s,t}(x)$ may be $0$ (if all labels on $x$ cause the Littlestone dimension to decrease).  

For a target concept $c$, we wish to make progress by (1) decreasing $L_t(c)$; and (2) increasing $M_t(c)$. To combine these two objectives, we define
$$
V_t(c):=  \ln M_t(c) - (8 \ln k )\cdot L_t(c). 
$$
To finish the proof, we will provide an update rule for $\mu_{t+1}$ from $\mu_t$, and show that it is always possible to choose a distribution on  hypotheses $h_{t+1}$ which cause $V_{t+1}(c)$ to increase in expectation slightly compared to $V_t(c)$.

\begin{center}
\fbox{
\begin{minipage}{0.95\linewidth}
\textbf{Update rule: $\mu_{t+1}$ from $\mu_t$}
\begin{enumerate}
    \item Input: $x=x_{t+1}$ and $i\in [k]$ such that we 
    are guaranteed that $c(x)\neq i$. \rl{that's not consistent with the use of $V_{t+1}^{x,c(x)}$.}
    \item For a string of values $(s,j)$ let 
    $$
    \nu(s,j)=\left\{
\begin{array}{ll}
 0 & \text{if $j=i$}\\
   \mu_t(s)\cdot \left(1-\frac{k-1}{k^3}+\frac{p_t(x,i)}{p_t(x,j)\cdot k}\right)  & \text{ if $r_{s,t}(x)=j$}  \\
     \mu_t(s)\cdot \frac{1}{k^3}  &  \text{otherwise}
\end{array}
\right.$$
\item 
 Normalize $\nu$ to obtain probability distribution $\mu_{t+1}$: 
 $$
 \mu_{t+1}(s,j):= \nu(s,j)\cdot \left(\sum_{s',j'} \nu(s',j')\right)^{-1}
 $$
\end{enumerate}
\end{minipage}
}
\end{center}

The following is the main property of our update rule, the only one that will require a calculation. Denote by $V_{t+1}^{x,i}(h)$ the value of $V_{t+1}(h)$ after applying our update rule with input $(x,i)$.

\begin{claim}\label{cl:1}
Let $h,c\in \mathcal{H}$ be  two hypotheses. Let $x\notin\{x_1,\ldots,x_t\}$ be such that $h(x)\neq c(x)$. Then 
$$
(V_{t+1}^{x,h(x)}(c)-V_t(c)) + (V_{t+1}^{x,c(x)}(h)-V_t(h)) \ge \frac{1}{5k}
$$
\end{claim}

\begin{proof}
   We first observe that $\nu(s,j)$ adds up to at most $1$ \rl{By my calculation above, I think it should sum to $2$}:
   \begin{align*}
       \sum_{s,j} \nu(s,j)-\sum_s \mu_t(s)     
&\le  \sum_{s,j} \mu_t(s)\cdot \mathbf{1}_{r_{s,t}(x)= j} \left(1-\frac{k-1}{k^3}+\frac{p_t(x,i)}{p_t(x,j)\cdot k}\right)-\sum_s \mu_t(s)\\
&=
       \sum_s \mu_t(s) \cdot \left(\sum_{j\neq i} \mathbf{1}_{r_{s,t}(x)= j} \cdot \frac{p_t(x,i)}{p_t(x,j)\cdot k}-\left(1-\frac{k-1}{k^3}\right)\cdot \mathbf{1}_{r_{s,t}(x)= i}\right)\\
       &=\sum_{j\neq i,p_t(x,j)>0} p_t(x,j)\cdot \frac{p_t(x,i)}{p_t(x,j)\cdot k} - \left(1-\frac{k-1}{k^3}\right)\cdot p_t(x,i)\\ 
       &\le \frac{k-1}{k}\cdot p_t(x,i)\ym{-}\left(1-\frac{k-1}{k^3}\right)\cdot p_t(x,i) \\
       &=
       \left(\frac{k-1}{k^3} - \frac{1}{k}\right)\cdot p_t(x,i) \le 0.
   \end{align*}
\ym{where we use the fact that $\sum_j \mathbf{1}_{r_{s,t}(x)= j}=1$ (YM: am I correct?, check first two lines!).}

\ym{I got lost in the first line. Added another line. The rest is a simple calculation. First, is $\mathbf{1}_{r_{s,t}(x)= j} $ a random variable with expectation $p_t(x,j)$ (need to correct in text)}

   Therefore
   $
   \mu_{t+1}(s,j)\ge \nu(s,j). 
   $

   Note that it is always the case that $L_{t+1}^{x,h(x)}(c)\le L_t(c)$ and $L_{t+1}^{x,c(x)}(h)\le L_t(h)$.\rl{Is $L^{(x,h(x)}_{t+1}$ the hypothesis class that is consistent on the larger sequence? why $h(x)$ and $c(x)$?}
There are two cases to consider. 

\paragraph{Case 1. $L_{t+1}^{x,h(x)}(c)<L_t(c)$ 
or  $L_{t+1}^{x,c(x)}(h)<L_t(h)$.}

Observe that since $c(x)\neq h(x)$, we always have 
$\mu_{t+1}^{x,c(x)}(h)\ge k^{-3}\cdot \mu_t(h)$ \rl{and also $\mu_{t+1}^{x,h(x)}(c)\ge k^{-3}\cdot \mu_t(c)$?} . 
\begin{align*}
    (V_{t+1}^{x,h(x)}(c)&-V_t(c)) + (V_{t+1}^{x,c(x)}(h)-V_t(h))  \\ &\ge
-3 \ln k -  (8 \ln k)  (L_{t+1}^{x,h(x)}(c)-L_t(c)) 
-3 \ln k -  (8 \ln k)  (L_{t+1}^{x,c(x)}(h)-L_t(h)) \\ &\ge
-6 \ln k + 8 \ln k = 2\ln k > \frac{1}{5k}
\end{align*}
\ym{where we used the fact that $L_t(c)+L_t(h)\geq 1+L_{t+1}^{x,c(x)}(h)+L_{t+1}^{x,h(x)}(c) $.}
\paragraph{Case 2. $L_{t+1}^{x,h(x)}(c)=L_t(c)$ 
and  $L_{t+1}^{x,c(x)}(h)=L_t(h)$.} In this case we have 
$r_{h,t}(x)=h(x)$ and $r_{c,t}(x)=c(x)$. Thus,
\begin{align*}
    (V_{t+1}^{x,h(x)}(c)-V_t(c))& + (V_{t+1}^{x,c(x)}(h)-V_t(h))  \\
    &= \ln M_{t+1}^{x,h(x)}(c)-\ln M_t(c) +
    \ln M_{t+1}^{x,c(x)}(h)-\ln M_t(h)\\ &\ge
    \ln \left( 1-\frac{k-1}{k^3}+\frac{p_t(x,c(x))}{p_t(x,h(x))\cdot k}\right)  + 
     \ln \left( 1-\frac{k-1}{k^3}+\frac{p_t(x,h(x))}{p_t(x,c(x))\cdot k}\right) \\ &>
      \ln \left( 1-\frac{k-1}{k^3}\right)  + 
     \ln \left( 1-\frac{k-1}{k^3}+\frac{1}{k}\right) \\
    &= \ln\left(1 + k^{-1}-2\cdot k^{-2} +k^{-3}+2\cdot k^{-4}-2\cdot k^{-5}+k^{-6}\right)\\ &>
    1+\frac{1}{5k}
\end{align*}
where the second inequality is because the fraction $
\frac{p_t(x,h(x))}{p_t(x,c(x))}>0$, and either the fraction or its reciprocal is at least $1$.
\end{proof}

\begin{claim}
\label{cl:2}
Consider the following game at step $t$. The adversary selects a concept $c$ and the learner selects the hypothesis $h$. The learner's $\mathrm{Payoff}(h,c):=1$ if $h=c$ (and the adversary accepts). Otherwise, let $x$ be the example $x\sim X(c,h)=X(h,c)$ the learner receives, such that $h(x)\neq c(x)$. Define 
$$
\mathrm{Payoff}(h,c):=V_{t+1}^{x,h(x)}(c)-V_t(c).
$$
Then there is a distribution $p_t$ on $h$ that for every $c$ guarantees an expected payoff of at least $\frac{1}{10k}$.
\end{claim}
\rl{In full information the learner was minimizing, suggesting we stay consistent}
\begin{proof}
    We will again use the minimax theorem. Thus, we only need to show that for any mixed strategy $q$ for the adversary, the learner can guarantee a payoff of at least $\frac{1}{10k}$. 

    Consider the imitation strategy $p=q$. The learner's payoff in this case is, by symmetry and Claim~\ref{cl:1}:\ym{again, there is a mess between random variables and their expecttaions}
    \begin{align*}
    \mathrm{Payoff}(q,q) =& \sum_h q(h)^2\cdot 1 \\
    &+ 
    \frac{1}{2}\cdot \sum_{c\neq h} q(h)\cdot q(c) \cdot \left( \sum_x X(c,h)\cdot ((V_{t+1}^{x,h(x)}(c)-V_t(c)) + (V_{t+1}^{x,c(x)}(h)-V_t(h)) ) \right)
    \\  >& \sum_h q(h)^2\cdot \frac{1}{10k}+ \frac{1}{2}\cdot \sum_{c\neq h} q(h)\cdot q(c) \cdot \left( \sum_x X(c,h)\cdot \frac{1}{5k}\right) \\=& 
    \sum_h q(h)^2\cdot \frac{1}{10k}+ \frac{1}{2}\cdot \sum_{c\neq h} q(h)\cdot q(c) \cdot \frac{1}{5k} = \frac{1}{10k}\cdot \left(\sum_h q(h)\right)^2 = \frac{1}{10k}.
    \end{align*}
\end{proof}

We are now ready to give the full algorithm. 

\begin{center}
\fbox{
\begin{minipage}{0.95\linewidth}
\textbf{Algorithm: Bandit-Feedback}

\begin{enumerate}
    \item Initialize $\mu_0$ to be the unit mass on empty string.
    \item While the adversary doesn't accept:
    \begin{enumerate}
        \item Define the zero-sum game on $\mathcal{H}$ with payoff as in Claim~\ref{cl:2}.
        \item Compute a minimax mixed strategy $p_t$ for the learner.
        \item Sample $h \sim p_t$ and submit $h$ as an equivalence query.
        \item If the adversary accepts, terminate.
        \item Otherwise receive counterexample $(x,i)$ and update $\mu_{t+1}$ according to the update rule
    \end{enumerate}
\end{enumerate}
\end{minipage}
}
\end{center}

\begin{claim}
    The expected number of steps of the the Bandit-Feedback algorithm is $\le  100 d k \ln k$.
\end{claim}
\begin{proof} Fix the concept $c$ the algorithm  is trying to learn. Let $T$ be a random variable representing the time step at which the algorithm terminates. 
    Define the following random process $V_t$.
    $$
    V_t := 
    \left\{
\begin{array}{ll}
   V_t(c)  & \text{ if $t<T$, that is, the algorithm is still active}  \\
   1  &  \text{otherwise (algorithm terminated)}
\end{array}
\right.
    $$
Observe that assuming that at step $t$ the algorithm is active, then at step $t$:
$$
V_{t+1}-V_t \ge \mathrm{Payoff}(h,c). 
$$
If $h\neq c$, this is true (with equality) by the definition of $\mathrm{Payoff}(h,c)$. If $h=c$, then $V_{t+1}=1$, while $V_t\le 0$, and thus 
$$
V_{t+1}-V_t \ge 1 \ge \mathrm{Payoff}(h,c). 
$$
Thus, assuming $t<T$, by Claim~\ref{cl:2} we have 
$$
\mathbf{E}[V_{t+1}-V_t] \ge \frac{1}{10 k},
$$
and 
$$
\mathbf{E}[V_{t+1}-V_t] \ge \frac{1}{10 k}\cdot \Pr[T>t]. 
$$
Thus, for every $N>0$, 
\begin{align*}
    \mathbf{E}[\min(T,N)] = \sum_{s=0}^{N-1} \Pr[T>t] \le&
    10 k \cdot \sum_{s=t}^{N-1}\mathbf{E}[V_{t+1}-V_t]\\ =&  
     10 k \cdot\mathbf{E}[V_{N}-V_0]  \le 10 k \cdot \left(1 -(-8 \ln k) \cdot d \right) < 100 d k \ln k. 
\end{align*}
Thus $$  \mathbf{E}[T]\le  100 d k \ln k$$
\end{proof}
}
\subsection{Lower bound}

\paragraph{Proof overview.} The concept class we consider are linear function over $\mathbb{F}_p^d$. We consider the simple model where counterexamples $x$ given linear functions $h$ and $c$ come from the uniform distribution on their disagreement points. Any two linear functions $h$ and $c$ disagree on all but $p^{d-1}$ points, which makes $x$ nearly uniformly distributed over $\mathbb{F}_p^d$. A uniform $x$ that does not depend on $(h,c)$ would have revealed no information about $c$. An ``almost uniform" $x$ reveals very little information about $c$ (namely, $O(1/p)$ bits). Initially, $c$ has $\Theta(d\log p)$ bits of entropy, leading to a lower bound of $\Theta(d p\log p)$ on the necessary number of queries. 

\smallskip

Given parameters $d$ and $k$, 
fix a prime number $p=\Theta>10$, and let the label space be $\mathcal{Y}=\mathbb{F}_p$.
Let the instance space be $\mathcal{X}=\mathbb{F}_p^d$, and let $\mathcal{H}$ be the class of all nonzero linear functionals from $\mathcal{X}$ to $\mathcal{Y}$.
It is standard that $\mathrm{Ldim}(\mathcal{H})=d$: each new value fixes the value of $f$ on one more dimension.

We consider the simplest adversary, which given a concept $c$ and hypothesis $h\neq c$ outputs a random point in 
$$D(c,h):=\{x~:~c(x)\neq h(x)\}.$$
Note that $D(c,h)$ covers almost all of $\mathcal{X}$: we have $|D(c,h)|=(p-1)\cdot p^{d-1}$.

Suppose there is an algorithm for learning $\mathcal{H}$ in the bandit feedback model using at most $q$ queries in expectation. If we choose the concept $c$ at random from $\mathcal{H}$, we can make the algorithm {\em deterministic}: the next query $h_t$ is a deterministic function of the transcript. 

Let $Q$ be the random variable representing the number of queries the algorithm ended up asking. We represent the transcript $\Pi$ of the learning protocol as a sequence of triples $(H_t,S_t,X_t)$: $H_t$ is the hypothesis the learner asks at step $t$, $S_t\in\{0,1\}$ is the flag representing whether $H_t=c$ (thus $S_Q=1$, and $S_t=0$ for $t<Q$), and $X_t$ is the random example distinguishing $H_t$ from $c$ the adversary returns. For $t>Q$ we define this tuple to be empty (since the algorithm terminates). 

Suppose that $\mathbf{E}[Q]=q$, by Markov's inequality $\Pr[Q\le 2q]\ge 1/2$, Denote by $W$ the indicator random variable such that $W=1$
if $Q\le 2q$, so that $Pr[W=0]\le 1/2$.

We use $\mathbf{H}(\cdot)$ to denote Shannon's entropy of a random variable, $\mathbf{I}(\cdot~;\cdot)$ is the mutual information.  Note that $\Pi_{1..2q}$ determines $W$, and thus $\mathbf{H}(W|\Pi_{1..2q})=0$. Let $C$ be the random variable representing the hidden concept. We have 
\ignore{\ym{The first line mkes no sense. Are you trying to say that \[E_{\Pi_{1..2q}}[\mathbf{I}(\Pi_{1..2q}; C)] = \mathbf{H}(C)-
    E_{W,\Pi_{1..2q}}[\mathbf{H}(C|\Pi_{1..2q},W)] \]
    I assume you are using the fact that $Pr[W=0]\leq 1/2$ and $\mathbf{H}(C|\Pi_{1..2q},W=1)=0 $. Is the end simply that $\mathbf{H}(C|\Pi_{1..2q},W=0) \leq \mathbf{H}(C) =d\log p$? }}
\begin{equation}
    \label{eq:B1}
    \begin{aligned}
     \mathbf{I}(\Pi_{1..2q}; C) &= 
      \mathbf{H}(C)-
       \mathbf{H}(C|\Pi_{1..2q}) & \text{(by definition)} \\
       & \ge  \mathbf{H}(C)-
       \mathbf{H}(C W|\Pi_{1..2q}) & \text{(adding term to entropy)}
       \\ & = \mathbf{H}(C)-
       \mathbf{H}(W|\Pi_{1..2q})-
       \mathbf{H}(C|\Pi_{1..2q}W)
       & \text{(chain rule)}
       \\ ~
       & =   \mathbf{H}(C)-
       \mathbf{H}(C|\Pi_{1..2q}W) 
         & (\mathbf{H}(W|\Pi_{1..2q})=0 )
         \\ &=
       \mathbf{H}(C)-Pr[W=0]\cdot 
       \mathbf{H}(C|\Pi_{1..2q},W=0)\\&\;\;\; -
       Pr[W=1]\cdot 
       \mathbf{H}(C|\Pi_{1..2q},W=1) 
       & \text{(splitting on value of $W$)} \\& =  
       \mathbf{H}(C)-Pr[W=0]\cdot 
       \mathbf{H}(C|\Pi_{1..2q},W=0)
       &\text{(when $W=1$, $\Pi$ determines $C$)}\\ & \ge (d \log p) -  Pr[W=0]\cdot d\log p & \ignore{\text{entropy $\le\log(|\text{domain}|)$}} \text{($ \mathbf{H}(C|\cdot)\leq\mathbf{H}(C)=d\log p$)} 
\\ & \ge (d \log p)/2 & (Pr[W=0]\le 1/2)
         \end{aligned}
\end{equation}
    \ignore{
\begin{multline}
    \label{eq:B1111}
    \mathbf{I}(\Pi_{1..2q}; C) =   \mathbf{I}(\Pi_{1..2q}; C) +\mathbf{I}(W; C|\Pi_{1..2q})
    =   \mathbf{I}(\Pi_{1..2q}W; C) \\=  \mathbf{H}(C)-
    \mathbf{H}(C|\Pi_{1..2q},W) \ge d\log p - Pr[W=0]\cdot  \mathbf{H}(C|\Pi_{1..2q},W=0) \ge 
    ( d\log p)/2. 
\end{multline}}

\ignore{
\textcolor{red}{I don't follow why $\mathbf{H}(C|\Pi_{1..2q}) \leq (d\log p)/2 $. However, I do see why $\mathbf{H}(C|\Pi_{1..2q}) \leq 1 + (d\log p)/2$, which is sufficient. The latter follows by letting $\mathtt{I}$ be the indicator of the event $Q\le 2q$,
and by:}
\begin{align*}
\textcolor{red}{\mathbf{H}(C|\Pi_{1..2q})} &\textcolor{red}{\leq \mathbf{H}(\mathtt{I},C|\Pi_{1..2q})}\\
                          &\textcolor{red}{= \mathbf{H}(\mathtt{I}|\Pi_{1..2q}) + \mathbf{H}(C|\Pi_{1..2q},\mathtt{I})}\\
                          &\textcolor{red}{\leq  1 + \frac{1}{2}\mathbf{H}(C|\Pi_{1..2q},\mathtt{I}=0)}\\
                          &\textcolor{red}{\leq 1 + \frac{1}{2}\mathbf{H}(C).}
\end{align*}
}

In other words, the first $2q$ elements of the transcript reveal at least half the information about the hidden concept $C$. 

On the other hand, by chain rule:
\begin{equation}
    \label{eq:B2}
    \mathbf{I}(\Pi_{1..2q}; C) =\sum_{t=1}^{2q}
     \mathbf{I}(\Pi_{t}; C|\Pi_{<t}). 
\end{equation}
We have (recall that $H_t$ is deterministic given the transcript):
\begin{align*}
     \mathbf{I}(\Pi_{t}; C|\Pi_{<t}) =&
        \mathbf{I}(H_{t}; C|\Pi_{<t})+
        \mathbf{I}(S_{t}; C|\Pi_{<t} , H_t)+
        \mathbf{I}(X_{t}; C|\Pi_{<t} , H_t ,  S_t) 
        \\ \le& \mathbf{H}(H_t|\Pi_{<t}) + 
        \mathbf{H}(S_t|\Pi_{<t} , H_t)  +
        \mathbf{I}(X_{t}; C|\Pi_{<t} , H_t, S_t=0) \\
        \le& 0 +  \mathbf{H}(S_t| S_{<t}) + 
        \mathbf{H}(X_{t}|\Pi_{<t}H_t, S_t=0)-
        \mathbf{H}(X_{t}|\Pi_{<t}H_t, S_t=0,C)\\ 
        \le&  \mathbf{H}(S_t| S_{<t}) + 
        \mathbf{H}(X_{t})-
        \mathbf{H}(X_{t}|\Pi_{<t}H_t, S_t=0,C)\\
    =& \mathbf{H}(S_t| S_{<t}) + \log (p^d) - 
        \log ((p-1)\cdot p^{d-1}) \\ =&
         \mathbf{H}(S_t| S_{<t}) + \log (p/(p-1)). 
\end{align*}
The second to last equality is because, given $C$ and $H_t$, the entropy of the example  $X_t$ is  $\log ((p-1)\cdot p^{d-1})$.
Plugging this into \eqref{eq:B2}, we get
\begin{align*}
      \mathbf{I}(\Pi_{1..2q}; C) \le& \sum_{t=1}^{2q} \left[    \mathbf{H}(S_t; S_{<t}) + \log (p/(p-1))\right] \\=& 
  \mathbf{H}(S_{1..2q}) + 2q \log (p/(p-1)) \le  \log (2q+1) + \frac{2q}{p-1}<\frac{3q}{p}. 
\end{align*}

Plugging this into \eqref{eq:B1}, we get
$$
\frac{3q}{p}> \mathbf{I}(\Pi_{1..2q}; C)\ge 
\frac{d\log p}{2},
$$
implying the lower bound on the expected number of queries:
$$
q> (d\cdot p \log p)/6 =\Theta(d\cdot k \log k). 
$$

\section{Adaptive Adversaries}\label{sec:adaptiveadv}
The following is a variant of learning from equivalence queries, allowing for an adversarial re-selection of the target concept in each round, subject to constraints created in previous rounds.

\begin{mdframed}[
  backgroundcolor=gray!10,
  linecolor=gray!50,
  linewidth=0.6pt,
  roundcorner=6pt,
  innertopmargin=1em,
  innerbottommargin=1em
]
\begin{center}
\textbf{Learning from Equivalence Queries with Concept Reselection}
\end{center}

\medskip
\noindent
Let $\mathcal{H}$ be a hypothesis class over a domain $\mathcal{X}$ with a label space $\mathcal{Y}$.

\medskip
\noindent
Let $\mathcal{H}_1=\mathcal{H}$, and let $c_1 \in \mathcal{H}_1$ be an unknown target concept.

\medskip
\noindent
The learning interaction proceeds in rounds $t = 1,2,\ldots$ as follows:
\begin{itemize}
    \item The learner proposes an hypothesis $h_t \in \mathcal{H}_t$.
    \item The environment either \emph{accepts} or \emph{rejects} $h_t$.
    \item If the environment accepts $h_t$, the interaction terminates.
    \item If the environment rejects $h_t$, it returns a counterexample $(x_t,y_t) = (x_t,c_t(x_t))$ s.t.\
    $h_t(x_t) \neq c_t(x_t)$.
    \item Let $\mathcal{H}_{t+1}=\{h\in\mathcal{H}_t~:~ h(x_t)=y_t\}$ and let $c_{t+1}$ be a target concept (potentially different from $c_t$) chosen from $\mathcal{H}_{t+1}$.
\end{itemize}

\noindent
The environment is required to accept $h_t$ whenever $h_t = c_t$.

\medskip
\noindent\textit{Interpretation.}
The target concepts $c_1,c_2,\ldots$ may be selected adversarially and adaptively (i.e., choice of $c_{t+1}$ may depend on the previous $t$ round of the protocol) subject to the requirement that target concept $c_{t+1}$ agrees with the labels provided in rounds $1,\ldots,t$.
\end{mdframed}

{Observe that, as long as counterexample generation is symmetric, our proofs also hold for learning under an adaptive adversary.}

\end{document}